\newtheorem{lemma}{Lemma}
\def\BibTeX{{\rm B\kern-.05em{\sc i\kern-.025em b}\kern-.08em
    T\kern-.1667em\lower.7ex\hbox{E}\kern-.125emX}}
\newcolumntype{L}[1]{>{\raggedright\let\newline\\\arraybackslash\hspace{0pt}}m{#1}}
\newcolumntype{C}[1]{>{\centering\let\newline\\\arraybackslash\hspace{0pt}}m{#1}}
\newcolumntype{R}[1]{>{\raggedleft\let\newline\\\arraybackslash\hspace{0pt}}m{#1}}
\begin{document}

\title{Training Latency Minimization for Model-Splitting Allowed Federated Edge Learning}

    \author{Yao Wen, Guopeng Zhang, Kezhi Wang, and Kun Yang 

    \vspace{-3mm}

    \IEEEcompsocitemizethanks{
        \IEEEcompsocthanksitem Yao Wen and Guopeng Zhang are with the School of Computer Science and Technology, China University of Mining and Technology, Xuzhou 221116, China. E-mail: ywen@cumt.edu.cn; gpzhang@cumt.edu.cn.
        \IEEEcompsocthanksitem Kezhi Wang is with the Department of Computer Science, Brunel University London, Middlesex UB8 3PH, U.K. E-mail: kezhi.wang@brunel.ac.uk.
        \IEEEcompsocthanksitem Kun Yang is with the School of Computer Science and Electronic Engineering, University of Essex, Colchester CO4 3SQ, U.K. E-mail: kunyang@essex.ac.uk. 
    }
}


\IEEEtitleabstractindextext{

\begin{abstract}

To alleviate the shortage of computing power faced by clients in training deep neural networks (DNNs) using federated learning (FL), we leverage the \textit{edge computing} and \textit{split learning} to propose a model-splitting allowed FL (\texttt{SFL}) framework, with the aim to minimize the training latency without loss of test accuracy. Under the \textit{synchronized global update} setting, the latency to complete a round of global training is determined by the maximum latency for the clients to complete a local training session. Therefore, the training latency minimization problem (TLMP) is modelled as a minimizing-maximum problem. To solve this mixed integer nonlinear programming problem, we first propose a \textit{regression method} to fit the quantitative-relationship between the \textit{cut-layer} and other parameters of an AI-model, and thus, transform the TLMP into a continuous problem. Considering that the two subproblems involved in the TLMP, namely, the \textit{cut-layer selection problem} for the clients and the \textit{computing resource allocation problem} for the parameter-server are relative independence, an alternate-optimization-based algorithm with polynomial time complexity is developed to obtain a high-quality solution to the TLMP. Extensive experiments are performed on a popular DNN-model \textit{EfficientNetV2} using dataset MNIST, and the results verify the validity and improved performance of the proposed \texttt{SFL} framework.

\end{abstract}

\begin{IEEEkeywords}
Federated learning, split learning, edge computing, computing task offloading, resource allocation.
\end{IEEEkeywords}
}

\maketitle

\IEEEdisplaynontitleabstractindextext

\IEEEpeerreviewmaketitle

\section{Introduction}

\IEEEPARstart{T}{he} latest Artificial Intelligence (AI) products are powered by cutting-edge machine learning (ML) technology, ranging from face detection \cite{Yan2022PrivacypreservingOA} and speech recognition \cite{9746585} installed on mobile devices to virtual assistants deployed in autonomous systems \cite{9985482}.
Large-scale data is essential for training high-performance AI-models, e.g., decision trees, support vector machines (SVMs), and deep neural networks (DNNs).
However, centralized approach to training AI-models requires clients to transfer privately-owned data to servers, which poses a great threat to users' privacy and security \cite{9810958}.
Google has proposed a new ML paradigm, called federated learning (FL) \cite{McMahan2017CommunicationEfficientLO}, that allows multiple clients to train an AI-model in a distributed manner, while keeping their data local. The vanilla FL algorithm, called \texttt{FedAvg} is illustrated in Fig. \ref{Split Federated Learning process details} (the left part). A parameter server (PS) first distributes an AI-model to be trained to multiple selected clients. Then, each client trains the model locally with the relevant data and upload the trained model to the PS for aggregation.
This process is iterated for several rounds and the aggregated model achieves a certain test accuracy.

\begin{figure}[htbp]
    \centering
    \includegraphics[width=8.8cm,height=7.8cm]{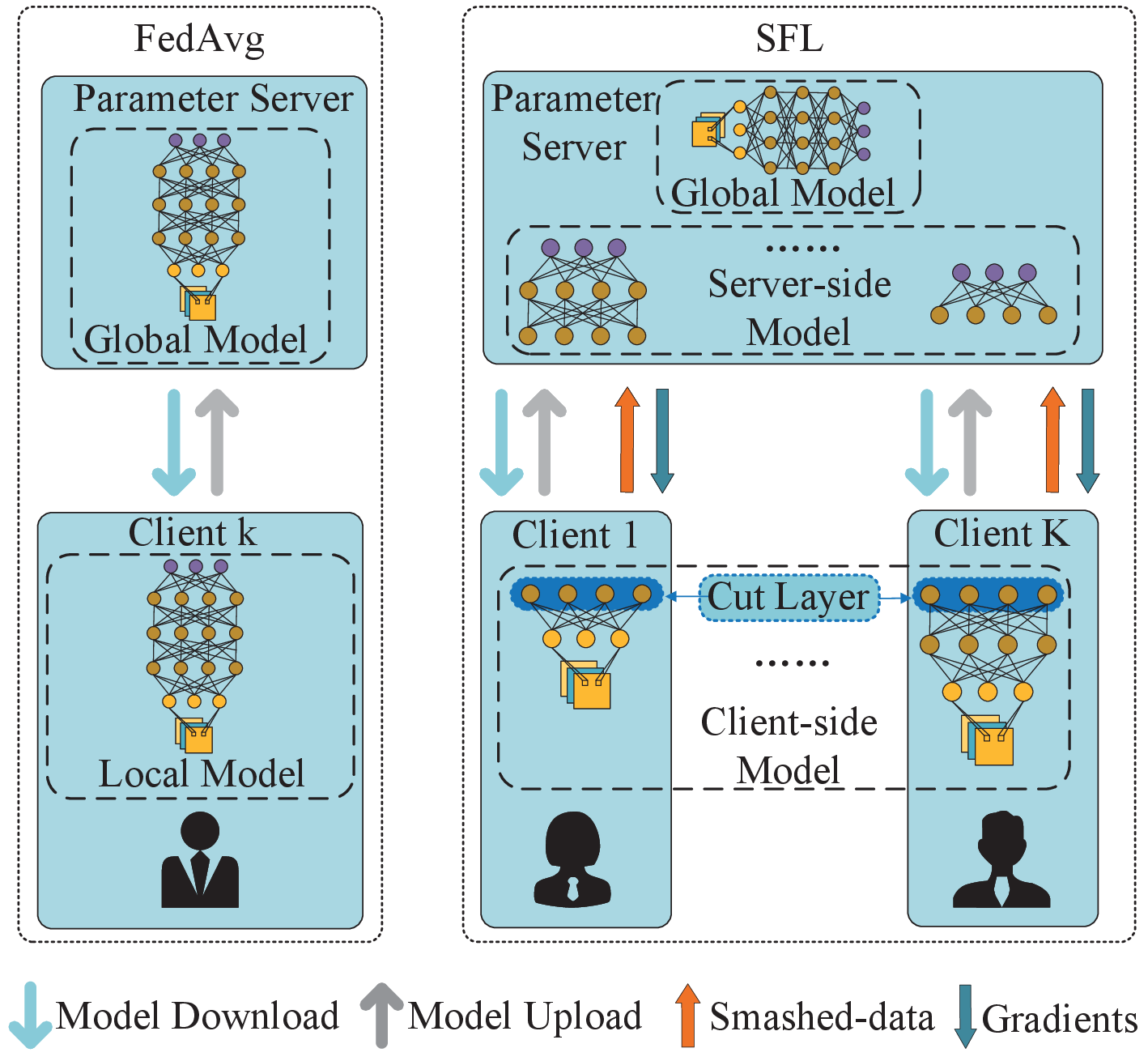}
    \caption{The frameworks of \texttt{FedAvg} and \texttt{SFL}.}
    \label{Split Federated Learning process details}
\end{figure}

However, \texttt{FedAvg} is more suitable for training lightweight AI-models, where the communication cost is greater than the computational cost caused to clients \cite{McMahan2017CommunicationEfficientLO}.
With the rapid popularization of \textit{high-speed mobile communications} (such as 5G) and the development of \textit{large-scale deep AI-models} (such as DNNs), the application context of FL has undergone fundamental changes:
\begin{itemize}
    \item The ultra-reliable and low-latency communication technologies of 5G make the communication cost no longer the bottleneck for FL \cite{8972389}.
    \item The computing power of mobile devices has not been improved significantly, compared with the rapidly growing demand for training large-scale DNN-models \cite{10134466}.
\end{itemize}
New techniques and methods are needed to cope with the new trends in the development of AI-models.

Fortunately, the method of \textit{split learning} (SL) \cite{GUPTA20181} supported by \textit{edge computing} \cite{GUPTA20181} provides a feasible solution to the above challenges. 
As illustrated in Fig. \ref{Split Federated Learning process details} (the right part), the model-split allowed FL (\texttt{SFL}) first splits an AI-model into two parts, which can be trained separately [4]. The part trained by the client is called the \textit{client-side} model, while the other part trained by the PS is called the \textit{server-side} model. The client initiates the forward propagation (FP) on the \textit{client-side} model to obtain the \textit{intermediate result}, called the \textit{smashed-data}, and then, uploads it to the PS. The PS uses the received \textit{smashed-data} as the input to continue the FP on the \textit{server-side} model.
Thereafter, the PS can initiate the backward propagation (BP) on the \textit{server-side} model. The obtained \textit{gradients}, as the \textit{server-side} \textit{intermediate result}, is transferred to the client, so the client can use it to complete the BP on the \textit{client-side} model.

Notably, by equipping the PS with a high-performance computing server, it can then process multiple \textit{server-side} models (offloaded by multiple clients) in parallel, thus greatly reducing the computational burden of the clients.
However, unlike in \texttt{FedAvg}, the clients in \texttt{SFL} cannot independently perform the FP and BP, resulting in the \textit{intermediate results} being communicated hundreds of times during training \textit{client-side} models.
The reduced computational burden on clients may lead to the increased network data traffic.
Although this fits the trend of increased network bandwidth and larger-and-deeper AI-models, the following issues must be addressed in order to improve the training efficiency of \texttt{SFL}:

\textit{Challenge 1: Cut-layer selection of the clients}. In the paradigm of SL, the \textit{cut-layer} for a client refers to the last layer of the \textit{client-side} model \cite{9812907}, as shown in Fig. \ref{Split Federated Learning process details}. The client-selected \textit{cut-layer} determines its communication load (for transmitting the \textit{smashed data}) and computational load (for training the \textit{client-side} model).
However, there is no explicit relationship between the \textit{cut-layer} and other parameters of an AI-model, which brings to a client a huge discrete solution space and extremely high computational complexity to choose the optimal \textit{cut-layer}.
    
\textit{Challenge 2: Computing resource allocation of the PS}. In \texttt{SFL}, a PS can simultaneously train multiple \textit{server-side} models offloaded by the participating clients. Due to the \textit{heterogeneity} of clients in \textit{computing power} and \textit{datasets}, the PS must optimize its allocation of the limited computing resources to improve the training efficiency of \texttt{SFL}.

In this paper, we leverage \textit{edge computing} and \textit{split learning} to improve the \textit{training efficiency} of \texttt{SFL}. In particular, we aim to minimize the training latency of the \texttt{SFL} using \textit{synchronized global model update} (SGMU) \cite{9778210} without losing the test accuracy of the trained model. 
Under the SGMU setting, the latency in completing a round of global training is determined by the maximum latency for the participating clients to complete a session of local training. 
Therefore, the training latency minimization problem (TLMP) is modelled as one that minimizes the maximum latency for the clients to complete a local training session.
To address \textit{Challenge 1} inherent in the problem, we take a popular AI-model, \textit{EfficientNetV2} \cite{tan2021efficientnetv2}, as an example and propose a \textit{regression method} to fit the quantitative relationship between the \textit{cut-layer} of the trained model and the resulting \textit{communication} and \textit{computational} loads for a client. Then, the original mixed integer nonlinear programming (MINLP) problem is transformed into a continuous one. Considering the relative independence of \textit{cut-layer selection} and \textit{computing resource allocation}, this problem is further decoupled into two subproblems, namely, the \textit{cut-layer selection problem of the clients} (to address \textit{Challenge 1}) and the \textit{computing resource allocation problem of the PS} (to address \textit{Challenge 2}). Finally, an alternate-optimization-based algorithm is proposed to obtain a high-quality solution to the training latency minimization problem.
Extensive experiments are performed on \textit{EfficientNetV2} using dataset MNIST \cite{deng2012mnist}, and the results verify the validity of our proposed method.

In summary, the main technical contributions of this paper are as follows:
\begin{enumerate}
    \item The \textit{edge computing} and \textit{split learning} techniques are orchestrated to improve the training efficiency of \texttt{SFL}.
    With the specific aim to minimize the training latency of the \texttt{SFL} using SGMU without loss of test-accuracy, an MINLP problem, called the TLMP, is formulated. The TLMP achieves the goal by constantly removing the system bottleneck, i.e., the maximum latency for the participating clients to complete a local training session.

    \item To address \textit{Challenge 1} inherent in the TLMP, a \textit{regression method} is presented to fit the quantitative relationship between the \textit{cut-layer} of an AI-model and the \textit{communication} and \textit{computational} loads generated to a client. In this way, the original MINLP problem is transformed into a continuous one.

    \item By decoupling the continuous TLMP into two independent subproblems, namely, the \textit{cut-layer selection problem of the clients} and the \textit{computing resource allocation problem of the PS}, an alternate-optimization-based algorithm with polynomial time complexity is designed to obtain a high-quality solution to the original TLMP. 
    Extensive experiments testify the effectiveness of the proposed method.       
\end{enumerate}

The rest of this paper is organized as follows. Sec. \uppercase\expandafter{\romannumeral2} reviews the related works. Sec. \uppercase\expandafter{\romannumeral3} describes the detailed training procedure of the \texttt{SFL} using SGMU. Sec. \uppercase\expandafter{\romannumeral4} first gives the parameterized expressions for the time consumption of the \texttt{SFL}, and then, formulates the TLMP as a min-max problem. The algorithm to solve the TLMP is developed in Sec. \uppercase\expandafter{\romannumeral5}. In Sec. \uppercase\expandafter{\romannumeral6}, experiment results are provided to verify the effectiveness of the proposed method. Finally, the paper is summarized in Sec. \uppercase\expandafter{\romannumeral7}.

\section{Related Works}
\textit{Federated Learning:}
As an effective distributed machine learning method for privacy protection, FL has been widely studied and applied in many fields.
Parameter Server architecture \cite{10.5555/2999134.2999271} is widely used to enable a large number of computation nodes to train a shared model by aggregating locally-computed updates.
Federated optimization  \cite{DBLP:journals/corr/KonecnyMRR16} is proposed in to improve communication efficiency by minimizing communication rounds, while keeping training data local and protecting data privacy.
To make FL more practical, an effective client selection algorithm is proposed in \cite{8761315} to solve the straggler (clients requiring longer training times) problem. Issues of fairness assurance between participating clients were discussed in \cite{9272649}. In \cite{9155494}, the authors counterbalanced the bias introduced by non-IID data and accelerated the convergence of model training. In \cite{WOS:000871080800013}, local training of clients constrained by computing power is accelerated by training task offloading. In \cite{9261995}, the authors studied the non-convex resource allocation problem of FL over wireless networks.

\textit{Splitting Learning :}
Distributed training of large-scale AI-models requires large amount of computing resources, but the limited computing resources of clients become the bottleneck restricting the performance (e.g., training latency, test accuracy, etc.) of FL.
Thereby, some recent researches focus on reducing the computational burden of clients. Split learning (SL) \cite{DBLP:journals/corr/abs-1812-00564} is one of the methods, which can divide an AI-model into multiple parts and trains them separately in a certain order.
A parallel SL method is proposed in \cite{9016486} to prevent overfitting due to differences in the training order and data size of the segmented model parts.
In \cite{8871124}, the authors use SL to assist the collaborative computation of DNNs between mobile devices and cloud server, and formulate the optimal computing-resource scheduling problem for the DNN layers as the shortest path problem and integer linear programming.
In \cite{9814499}, the authors proposed a multi-split algorithm that can assign DNN submodels to each computational node in a given network topology, and a computational graph search problem was proposed to optimize the assignment.
In \cite{9569707}, SL is used to reduce the total energy cost of edge devices under time-varying wireless channels.
In \cite{9923620}, the coexistence of FL and SL is allowed in wireless networks, and the convergence of hybrid split and federated learning (HSFL) algorithm is analyzed with non-IID data distribution.

\textit{Collaborative training of AI-models:}
In a collaborative training framework such as FL, the bottleneck is in the computing power and workload differences between the client and the edge server. Therefore, some recent studies have focused on the allocation of computing, storage and communication resources for edge devices to improve the efficiency of collaborative training.
To eliminate the straggler clients in an FL task, a timeout threshold is set in \cite{DBLP:journals/corr/abs-1804-08333}, after which the uploaded model will be discarded to reduce the overall training latency.
An effective method is proposed in \cite{9134409} to dynamically and virtually allocate the computing and communication resources of an edge server to multiple clients, allowing the task offloading of the clients. The challenge lies in estimating client training time, as the real time consumption is only known after the actual run.
To address this issue, the data fitting method, i.e., optimal regression model, is proposed in \cite{9812907} to predict training time of clients. 
The authors in \cite{9272649} modelled the estimation of client's local time consumption (consisting of transmitting and training model) as a C$^2$MAB problem.
Then, they converted the online scheduling problem into an offline problem with Lyapunov optimization and solved it using a divide-and-conquer method.
By applying deep reinforcement learning (DRL), a capability-matched model offloading strategy is designed in \cite{9778210} for heterogeneous clients, aiming to minimize the training latency of an FL task.
The authors in \cite{9999679} used DNN partitioning to minimize the FL training latency under the constraints of device-specific participation rates, energy consumption, and memory usage, regardless of the transmission of the smashed data over a wireless network.

\textit{Summary:} Inspired by the existing works, we leverage \textit{edge computing} and SL to improve the training efficiency of FL, a collaborative training paradigm. Under the synchronized
global model update (SGMU) setting, the two key challenges described in Sec. I, namely, (1) \textit{cut-layer selection of the clients} and (2) \textit{computing resource allocation of the PS}, must be dealt with jointly, which is not yet involved in the related works.

\section{System Model}

We consider a FL system consisting of a unique PS and a set $\mathcal{K}$ of $K$ clients.
The goal of the PS is to train an AI-model with the data owned by the clients but without directly sharing the data. Next, we review \texttt{FedAvg}, the vanilla FL algorithm, and then, describe \texttt{SFL}, the \textit{edge computing} and \textit{split learning} supported FL method. The system parameters are summarized in the following Table \ref{tab:parameter}.

\begin{table}[!htbp]
\newcommand{\tabincell}[2]{\begin{tabular}{@{}#1@{}}#2\end{tabular}}  
\begin{center}
    \caption{Parameter Table}
    \label{tab:parameter}
    \begin{tabular}{cp{2.54in}}
        \toprule
        Symbol & Description \\
        \midrule
        
        $\textbf{w}_k$, $\textbf{w}^\text{C}_k$, $\textbf{w}^\text{S}_k$ & Local model of client $k$, \textit{client-side} model of client $k$, \textit{server-side} model of client $k$. \\
        
        $\mathcal{D}_k$, $\mathcal{B}_k$ & Local dataset of client $k$, mini-batch of client $k$. \\
        
        $x_{k, n}$, $y_{k, n}$, $\hat{y}_{k, n}$ & The $n^\text{th}$ sample of $\mathcal{D}_k$ or $\mathcal{B}_k$, ground-true label, prediction of the label.\\
        
        $\mathcal{L}(\cdot)$, $\triangledown \mathcal{L}(\cdot)$ & Sample-wise loss function, gradient of loss function.\\
        
        $\eta$ & Learning rate.\\
        
        $I_k$ & Number of epochs in a local training session of client $k$.\\

        $w^{(l)}$ & The parameters of the $l^\text{th}$ layer of model $\textbf{w}$. \\

        $L$ & The number of layers of an AI-model. \\
        
        $l_k$, $l_k^\text{min}$ & \textit{Cut-layer} of client $k$, the minimum value of $l_k$ that client $k$ can take. \\
        
        $\mathcal{S}_{k,n}$ & Sample-wise \textit{smashed-data} of client $k$. \\
        
        $\mathcal{G}_{k,n}$ & Sample-wise \textit{gradients} of client $k$. \\
        
        $\Lambda_k$ & Size of $\mathcal{S}_{k,n}$ and $\mathcal{G}_{k,n}$. \\

        $r_k$ & Data rate between the PS and client $k$. \\
        
        $f_k^\text{C}$  & Computing power of client $k$ for training 
        $\textbf{w}^\text{C}_k$. \\
        
        $f_k^\text{S}$ & Computing power of the PS allocated to client $k$ for training $\textbf{w}^\text{S}_k$.\\

        $F^\text{C}_k$, $B^\text{C}_k$ & Amount of computing resources required for client $k$ to perform FP/BP with one data-sample.\\
        
        $F^\text{S}_k$, $B^\text{S}_k$ & Amount of computing resources required for the PS to perform FP/BP with one data-sample.\\

        $F_k^\text{tot}$ & $F_k^\text{tot}=F^\text{C}_k+B^\text{C}_k$, which is the computational load of client $k$ for training the \textit{client-side} model.\\ 
        
        $\Gamma$ & Amount of computing resources required to train an AI-model with one data-sample. \\
        
        $F^{\text{max}}$ & Amount of computing resources available for the PS. \\

        $T_k$ & Latency for client $k$ to complete a local training session. \\
        
        $\mathcal{T}$ & Latency for the $K$ clients to complete a round of global training. \\
        
        $\textbf{L}$ & $\textbf{L}=(l_1, \cdots, l_K)$. \\
        
        $\textbf{F}$ & $\textbf{F}=(f^\text{S}_1, \cdots, f^\text{S}_k, \cdots, f^\text{S}_K)$. \\
        
        $\tilde{T}_k$ & Latency for client $k$ to complete a local training session using only the local computing power $f_k^\text{C}$. \\
        
        $\tilde{k}$ & New index of client $k$ according to $\tilde{T}_k$. \\
        
        $\mathcal{K}_\texttt{FedAvg}$ & Set of clients adopting $\texttt{FedAvg}$ to train the local model.\\
        
        $\mathcal{K}_\text{\texttt{SFL}}$ & Set of clients adopting $\texttt{SFL}$ to train the local model.\\
        
        $\theta$ & The index of the first client in $\mathcal{K}_\text{\texttt{SFL}}$. \\

        \bottomrule
    \end{tabular}
\end{center}
\end{table}

\subsection{The Basic of \texttt{FedAvg}}
In \texttt{FedAvg} \cite{McMahan2017CommunicationEfficientLO}, the PS first broadcasts the initial model $\textbf{w}$ to the $K$ clients. Then, each client $k$ ($\forall k \in \mathcal{K}$) trains the model, represented by $\textbf{w}_k$, locally and independently by using its own dataset $\mathcal{D}_k = \{\left(x_{k,n}, y_{k,n}\right)| n = {1, \cdots, n_k}\}$, where $n_k$ is the size of the dataset, and $x_{k,n}$ and $y_{k,n}$ are respectively the sample and its ground-true label.

\textit{Local training at client k}: Using the \textit{stochastic gradient descent} (SGD), a mini-batch $\mathcal{B}_k\subseteq \mathcal{D}_k$ can be randomly sampled from $\mathcal{D}_k$ to train $\textbf{w}_k$ in each local training epoch. With each $(x_{k,n}, y_{k,n}) \in \mathcal{B}_k$, client $k$ first performs the FP to obtain $\hat{y}_{k,n}=\texttt{fp}(x_{k,n};\textbf{w}_k)$, the prediction of $y_{k,n}$ using the current model $\textbf{w}_k$. 
Then, client $k$ performs the BP and calculates the gradient $\triangledown \mathcal{L}(\hat{y}_{k,n}, y_{k,n}; \textbf{w}_k)$ w.r.t $\textbf{w}_k$, where $\mathcal{L}(\hat{y}_{k,n}, y_{k,n}; \textbf{w}_k)$ is the sample-wise loss function. The local model of client $k$ can be updated as
\begin{align}
    \textbf{w}_k = \textbf{w}_k - \eta \triangledown \mathcal{L}(\hat{y}_{k,n}, y_{k,n}; \textbf{w}_k),\ \forall k \in \mathcal{K},
\end{align}
where $\eta$ is the learning rate.

During a local training session, the above training epoch is iterated $I_k$ times. Thereafter, client $k$ uploads the trained model $\textbf{w}_k$ to the PS for aggregation.

\textit{Global model update at the PS}: Under the \textit{synchronized global model update} (SGMU) setting, the PS can update the global model $\textbf{w}$ by using the following eq. \eqref{agrregation function}, only if all the local models of the $K$ clients are collected. 
\begin{equation}
    \textbf{w} = \sum_{k\in\mathcal{K}}\frac{n_k}{\sum_{k\in\mathcal{K}}n_k}\textbf{w}_k. \label{agrregation function}
\end{equation}

The global model update will be iterated several rounds until the learning goal (e.g., a certain prediction accuracy on the test dataset or the maximum number of global training rounds) is achieved.

\subsection{The Framework of \texttt{SFL}}
Referring to \cite{9812907}, a \textit{layer} of an AI-model is defined as the minimum divisible unit of the model parameter, which can be either a separate layer (for example, active, convolution, or fully connection layers) or a combination of multiple consecutive layers.
Then, an AI-model $\textbf{w}$ can be partitioned into $L$ layers as
\begin{equation}
    \textbf{w}=w^{(1)} \uplus \cdots \uplus w^{(l)}\uplus \cdots \uplus w^{(L)},
\end{equation}
where $w^{(l)}$ represents the $l^\text{th}$ layer's parameter of the model and the operator $"\uplus"$ represents connecting any two consecutive layers.

Unlike \texttt{FedAvg}, which orders each client to train $\textbf{w}_k$ independently, \texttt{SFL} splits $\textbf{w}_k$ for each client $k$ into the following two parts
\begin{equation}
    \textbf{w}_k=\textbf{w}^\text{C}_k \uplus \textbf{w}^\text{S}_k,\  \forall k \in \mathcal{K},
    \label{model integrity}
\end{equation}
where $\textbf{w}^\text{C}_k = w^{(1)} \uplus \cdots \uplus w^{(l_k)}$ and $\textbf{w}^\text{S}_k = w^{(l_k+1)} \uplus \cdots \uplus w^{(L)}$ are respectively called the \textit{client-side} model and \textit{server-side} model, and are respectively submitted to client $k$ and the PS for training. The last layer of the \textit{client-side} model, $w^{(l_k)}$, is termed as the \textit{cut-layer} for client $k$ \cite{Wu2022SplitLO}.

\begin{figure}[htbp]
    \centering
    \includegraphics[width=9cm,height=8.9cm]{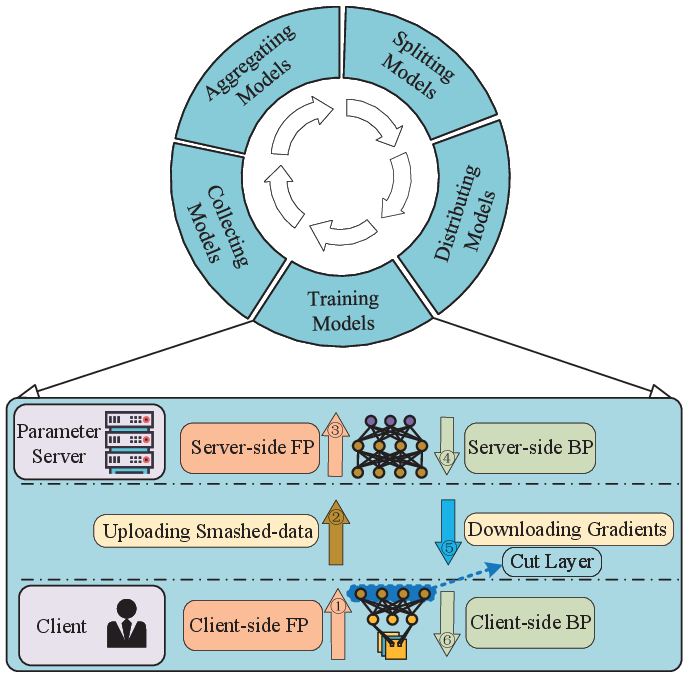}
    \caption{The workflow of the \texttt{SFL}.}
    \label{Split Federated Learning process show}
\end{figure}

In Fig. \ref{Split Federated Learning process show}, we show the workflow of \texttt{SFL}, which includes the following 5 phases: 1) Splitting local models; 2) Distributing local models; 3) Training local models; 4) Collecting local models; and 5) Aggregating local models. The main difference between \texttt{SFL} and \texttt{FedAvg} is in the $3^\text{rd}$ and $5^\text{th}$ phases. The detail is given below.

\textbf{Phase 3} (Training local models): Since $\textbf{w}_k$ is split into two parts, namely $\textbf{w}^\text{C}_k$ and $\textbf{w}^\text{S}_k$, the training of $\textbf{w}_k$ requires the collaboration between client $k$ and the PS. 
\begin{itemize}
    \item \textit{Forward Propagation}:
    \begin{itemize}
        \item[(1)] \textit{Client-side FP}: In each local training epoch, client $k$ starts the FP on the \textit{client-side} model $\textbf{w}^\text{C}_k$ using mini-batch $\mathcal{B}_k$. The output at the cut-layer $w^{(l_k)}$, called the \textit{smashed-data} \cite{9812907}, is given as
        \begin{equation}
            \mathcal{S}_{k,n} = \texttt{fp}\big(x_{k,n}; \textbf{w}^\text{C}_k\big),\  \forall k \in \mathcal{K}.
        \end{equation}
        
        \item[(2)] \textit{Uploading smashed data}: Client $k$ uploads the \textit{smashed data} $\mathcal{S}_{k,n}$ and the label $y_{k,n}$\footnote{Uploading labels to PS undermines the data-privacy of clients. To address this issue, the authors in \cite{10.1145/3559613.35632011111} proposed a three-stage SL method to avoid the leakage. For reconstruction attacks on training data, the leakage risk can also be reduced by using the differential privacy \cite{9609550}\cite{10097035}. This paper uses an approach similar to \cite{9812907} that allows a client to independently select the minimum \textit{cut-layer} to implement its security requirement. Please refer to eq. \eqref{UE runing time in SFL fill with model} and its description in Sec. IV-B for detail.} as the client-side \textit{intermediate results}, to the PS for further processing. 
    
        \item[(3)] \textit{Server-side FP}: Upon receiving $\mathcal{S}_{k,n}$ and $y_{k,n}$, the PS continues the FP on the \textit{server-side} model $\textbf{w}^\text{S}_k$, and obtains the predicted value of $y_{k,n}$ as 
        \begin{equation}
            \hat{y}_{k,n} = \texttt{fp}\big(\mathcal{S}_{k,n}; \textbf{w}^\text{S}_k\big),\  \forall k \in \mathcal{K}.
        \end{equation}
    \end{itemize}

    \item \textit{Backward propagation}:
    
    \begin{itemize}
        \item[(4)] \textit{Sever-side BP}: With $\hat{y}_{k,n}$ and $y_{k,n}$, the PS instead of client $k$ starts the BP. The \textit{server-side} model $\textbf{w}^\text{S}_k$ is updated as
        \begin{equation}
            \textbf{w}^\text{S}_k = \textbf{w}^\text{S}_k - \eta \triangledown \mathcal{L}(\hat{y}_{k,n}, y_{k,n}; \textbf{w}^\text{S}_k),\  \forall k \in \mathcal{K}.
        \end{equation}

        \item[(5)] \textit{Downloading gradients}: Let $\mathcal{G}_{k,n}$ denote the \textit{gradients} generated at layer $w^{l_{k+1}}$, the first layer of $\textbf{w}^\text{S}_k$. The PS should transfers $\mathcal{G}_{k,n}$ as the server-side \textit{intermediate results} to client $k$ to continue the BP.
    
        \item[(6)] \textit{Client-side BP}: Upon receiving $\mathcal{G}_{k,n}$ from the PS, client $k$ can complete the \textit{client-side} BP and updates the \textit{client-side} model $\textbf{w}^\text{C}_k$ as 
        \begin{equation}
            \textbf{w}^\text{C}_k = \textbf{w}^\text{C}_k - \eta \triangledown \mathcal{L}(\mathcal{G}_{k,n}; \textbf{w}^\text{C}_k),\  \forall k \in \mathcal{K}.
        \end{equation}
    \end{itemize}
\end{itemize}

\textbf{Phase 5} (Aggregating local models): At the end of each local training session, client $k$ uploads the trained \textit{client-side} model $\textbf{w}^\text{C}_{k}$ to the PS. The PS can then combine $\textbf{w}^\text{C}_{k}$ with the trained \textit{server-side} model $\textbf{w}^\text{S}_{k}$ as $\textbf{w}_k = \textbf{w}^\text{C}_k \uplus \textbf{w}^\text{S}_k$.
Under the SGMU setting, when obtaining all $\textbf{w}_k$ of the $K$ clients, the PS can aggregate them to update the global model $\textbf{w}$ by using eq. \eqref{agrregation function}.

\section{Problem Formulation}
Although the \texttt{SFL} reduces the computational burden of clients for training local models, the communication overhead inevitably increases because the \textit{intermediate results} $\mathcal{S}_{k,n}$ and $\mathcal{G}_{k,n}$ are communicated multiple times between the PS and each client due to the collaborative training.

In this section, we first quantify the latency of each phase of the \texttt{SFL}. Then, the training latency minimization problem is proposed.

\subsection{Latency of each phase}
Referring to Sec. III-B, the latency of each phase of the \texttt{SFL} is analyzed as follows.

\textbf{Phase 1} (Splitting local models): The latency of this phase is caused by performing \textbf{Algorithm \ref{Optimization of problem with one iteration}}, namely, the joint cut-layer selection and computing resource allocation algorithm proposed in Sec. V-E. Considering that the algorithm has a polynomial time complexity and runs on the PS with sufficient computing power, the latency is negligible.

\textbf{Phase 2} (Distributing local models): Let $r_k$ denote the data rate between the PS and client $k$ in the current round of global training \footnote{In general, the uplink bandwidth and downlink bandwidth are asymmetric. Since the \textit{intermediate results} generated by the clients are different, in the downlink, PS can only distribute the \textit{intermediate results} to the clients using orthogonal unicast rather than broadcast. So we assume that the uplink and downlink rate is the same for each client.}. The latency for client $k$ to download the \textit{client-side} model $\textbf{w}^\text{C}_k$ from the PS is given by
\begin{equation}
   \texttt{DM}_k =  |\textbf{w}^\text{C}_k|/{r_k}, \forall k \in \mathcal{K}, \label{tau_sfl_k}
\end{equation}
where $|\textbf{w}^\text{C}_k|$ denotes the size (in bits) of $\textbf{w}^\text{C}_k$.

\textbf{Phase 3} (Training local models): As analyzed in Sec. III-B, the training of the local model of any client $k$, $\textbf{w}_k=\textbf{w}^\text{C}_k \uplus \textbf{w}^\text{S}_k$, consists of 6 stages. The latency of each stage is given below. 
\begin{itemize}
    \item[(1)] \textit{Client-side} FP. Let $F^\text{C}_k$ denote the number of \textit{float point operations} (Flops) required for client $k$ to perform the sample-wise FP on $\textbf{w}^\text{C}_k$. Let $f_k^\text{C}$ (in Flops/s) denote the available computing power for client $k$. Then, the latency to process the total $|\mathcal{B}_k|$ samples is given by
    \begin{equation}
        \texttt{FP}_k^\text{C} = F^\text{C}_k |\mathcal{B}_k| / {f_k^\text{C}},\ \forall k \in \mathcal{K}. 
    \end{equation}

    \item[(2)] \textit{Uploading smashed-data}. Let $|\mathcal{S}_{k,n}|$ denoted the size (in bits) of $\mathcal{S}_{k,n}$. The latency of transmitting the total $|\mathcal{B}_k|$ pieces of \textit{smashed-data} is given by\footnote{To start the BP, the PS must have the label of each sample. The labels can be transferred by the clients to the PS along with the \textit{smashed-data}. Since the size of labels is much smaller than the \textit{smashed-data}, the transmission overhead is omitted here.}
    \begin{equation}
        \texttt{TS}_k = |\mathcal{S}_{k,n}| |\mathcal{B}_k| / r_k,\  \forall k \in \mathcal{K}.
    \end{equation}
       
    \item[(3)] \textit{Server-side FP}. Let $F^\text{S}_k$ denote the number of Flops required for the PS to perform the sample-wise FP on $\textbf{w}^\text{S}_k$. Let $f_k^\text{S}$ (in Flops/s) denote the computing power allocated by the PS to client $k$. Then, the latency of processing the total $|\mathcal{B}_k|$ samples is given by
    \begin{equation}
        \texttt{FP}_k^\text{S} = F^\text{S}_k |\mathcal{B}_k| / f_k^\text{S},\ \forall k \in \mathcal{K}.
    \end{equation}
    
    \item[(4)] \textit{Server-side BP}. Let $B^\text{S}_k$ denote the number of Flops required for the PS to perform the sample-wise BP on $\textbf{w}^\text{S}_k$. Then, the latency of processing the total $|\mathcal{B}_k|$ samples is given by
    \begin{equation}
        \texttt{BP}_k^\text{S} = B^\text{S}_k |\mathcal{B}_k| / f_k^\text{S},\ \forall k \in \mathcal{K}. 
    \end{equation}

    \item[(5)] \textit{Downloading gradient}. Let $|\mathcal{G}_{k,n}|$ (in bits) denote the size of $\mathcal{G}_{k,n}$. Then, the latency of downloading the total $|\mathcal{B}_k|$ \textit{gradients} is given by
    \begin{align}
        \texttt{TG}_k = |\mathcal{G}_{k,n}| |\mathcal{B}_k| / {r_k} , \forall k \in \mathcal{K}.
    \end{align}   

    \item[(6)] \textit{Client-side BP}. Let $B^\text{C}_k$ denote the number of Flops required for client $k$ to perform the sample-wise BP on $\textbf{w}^\text{C}_k$. 
    The time required to process the total $|\mathcal{B}_k|$ samples is given by
    \begin{equation}
        \texttt{BP}_k^\text{C} = B^\text{C}_k  |\mathcal{B}_k| / f_k^\text{C},\ \forall k \in \mathcal{K}.
    \end{equation}
\end{itemize}

\textbf{Phase 4} (Collecting client models): After $I_k$ local training epochs, client $k$ uploads the updated \textit{client-side} model $\textbf{w}^\text{C}_k$ to the PS for aggregation. The latency is given by 
\begin{equation}
   \texttt{UM}_k =  |\textbf{w}^\text{C}_k| / r_k ,\ \forall k \in \mathcal{K}. \label{tau_sfl_k_upload_model}
\end{equation}

\textbf{Phase 5} (Aggregating local models):
Aggregating client models requires only small computation effort. Since the PS has sufficient computing power, the latency is negligible.

\subsection{Overall time consumption of \texttt{SFL}}
In general, the \textit{smashed-data} and \textit{gradients} generated by processing one data-sample have the same size $|\mathcal{S}_{k,n}|=|\mathcal{G}_{k,n}|=\Lambda_k$. The latency for client $k$ to complete a local training session, i.e., $I_k$ local training epochs, is given by
\begin{align}
    T_k & = \texttt{DM}_k + \texttt{UM}_k + \notag \\
    &\ \ \ \ I_k |\mathcal{B}_k| (\texttt{FP}_k^\text{C} + \texttt{TS}_k + \texttt{FP}_k^\text{S} + \texttt{BP}_k^\text{S} + \texttt{TG}_k + \texttt{BP}_k^\text{C}) \notag \\
    & = 2\frac{|\textbf{w}^\text{C}_k|}{r_k} + I_k |\mathcal{B}_k| \left( \frac{ F^\text{C}_k+B^\text{C}_k}{f_k^\text{C}} + \frac{ F^\text{S}_k+B^\text{S}_k}{f_k^\text{S}} + 2\frac{\Lambda_k}{r_k} \right), \notag \\
    & \;\;\;\;\; \forall l_k \in \{l_k^\text{min}, \cdots, L\}, \ \forall k \in \mathcal{K}. \label{UE runing time in SFL fill with model} 
\end{align}
where $l_k^\text{min}$ is the minimum value of $l_k$ that client $k$ can take. The value of $l_k^\text{min}$ reflects the privacy requirement of client $k$. The larger $l_k^\text{min}$, the less likely it is to derive the raw information of the client from $\mathcal{S}_{k,n}$.

From eq. \eqref{UE runing time in SFL fill with model}, we note that when the \textit{cut-layer} is selected as $l_k=L$, $\textbf{w}^\text{C}_k=\textbf{w}$ and $\textbf{w}^\text{S}_k = \emptyset$. It means that client $k$ prefers to train the entire model $\textbf{w}_k$ locally, as in \texttt{FedAvg}. In this case, one can get
\begin{align}
    F^\text{S}_k=0,\ B^\text{S}_k=0, \ \text{and}\ \Lambda_k=0,\ \text{if}\ l_k=L,\ \forall k \in \mathcal{K}.
    \label{fedavg mode}
\end{align}
By substituting eq. \eqref{fedavg mode} into eq. \eqref{UE runing time in SFL fill with model} , the latency for client $k$ to complete a local training session in this case is given by 
\begin{align}
    T_k = 2\frac{|\textbf{w}|}{r_k}+ I_k |\mathcal{B}_k| \frac{\Gamma}{f^\text{C}_k},\ \text{if}\ l_k=L,\ \forall k \in \mathcal{K}, \label{Time consumption T_k at layer L}
\end{align}
where $|\textbf{w}|$, $\Gamma$ are the size of model $\textbf{w}$ (in bits) and total amount of computing load for one data-sample of model $\textbf{w}$.

Combining eqs. \eqref{UE runing time in SFL fill with model} and \eqref{Time consumption T_k at layer L}, we can represent the training latency $T_k$ of client $k$ in the following form.
\begin{equation}
    T_k =  
    \begin{cases} 
        \displaystyle {\frac{2|\textbf{w}^\text{C}_k|}{r_k} + I_k |\mathcal{B}_k| \left( \frac{ F^\text{C}_k+B^\text{C}_k}{f_k^\text{C}} + \frac{ F^\text{S}_k+B^\text{S}_k}{f_k^\text{S}} + 2\frac{\Lambda_k}{r_k} \right)},\\ 
        \quad \quad \quad \quad \quad \quad\ \forall l_k \in \{l_k^\text{min}, \cdots, L-1\},\ \forall k \in \mathcal{K}, \\
        \displaystyle {\frac{2|\textbf{w}|}{r_k}+ I_k |\mathcal{B}_k| \frac{\Gamma}{f^\text{C}_k}},\ \text{if}\ l_k=L,\ \forall k \in \mathcal{K}.\\
    \end{cases}
    \label{original expression for T_k}
\end{equation}

\subsection{Training Latency Minimization Problem}
From eq. \eqref{UE runing time in SFL fill with model}, we know that
\begin{itemize}
    \item The latency for any client $k$ to train the model and communicate the \textit{intermediate results} depends on the selected cut-layer $l_k$;
    \item While the PS is much more powerful than each of the clients, it needs to serve many clients at the same time, so its available computing resources are relatively limited. 
\end{itemize}
Hence, the communication and computing resources of the clients and PS must be jointly scheduled to adapt to the \textit{cut-layers} selected for the clients, so that the overall training latency of the \texttt{SFL} can be minimized.

Because we can only accurately predict the available computing resources and data rate for the clients and PS for a short time in the future \cite{9778210}, our goal is set to minimize $\mathcal{T}$, the latency to complete one round of global training. As the SGMU is adopted, the overall latency $\mathcal{T}$ depends on $\mathcal{T}=\max_{k\in\mathcal{K}}T_k$, the maximum latency for the $K$ clients to complete a local training session. Let $\textbf{L} = (l_1, \cdots, l_K)$ and $\textbf{F} = (f^\text{S}_1, \cdots, f^\text{S}_k, \cdots, f^\text{S}_K)$. The training latency minimization problem for the \texttt{SFL} is formulated as
\begin{align}
    & \min_{\textbf{L},\textbf{F}} \mathcal{T}, \; \;  \mathcal{T}=\max_{k\in\mathcal{K}}\ T_k,  \label{minT_total_original:mainfun mathcal T by lf} \\
    \mbox{s.t.}\;\;\;
    & l_k \in \{ l_k^\text{min}, \cdots, \ L\},\ \forall k \in \mathcal{K}, \tag{\ref{minT_total_original:mainfun mathcal T by lf}.1} \label{minT_total_original:subject l_m range} \\
    & \sum^{K}_{k=1} f^\text{S}_k \leqslant F^{\text{max}},  \tag{\ref{minT_total_original:mainfun mathcal T by lf}.2} \label{minT_total_original:subject_f_max} \\
    & |\textbf{w}^\text{C}_k| + |\textbf{w}^\text{S}_k| = |\textbf{w}|,\ \forall k \in \mathcal{K}, \tag{\ref{minT_total_original:mainfun mathcal T by lf}.3} \label{minT_total_original:sum_of_model_transmission_size} \\
    & F^\text{C}_k + F^\text{S}_k + B^\text{S}_k + B^\text{C}_k = \Gamma,\ \forall k \in \mathcal{K}, \tag{\ref{minT_total_original:mainfun mathcal T by lf}.4} \label{minT_total_original:sum_of_model_computation}
\end{align} where constraint \eqref{minT_total_original:subject l_m range} is the value space of the \textit{cut-layer} for client $k$,
constraint \eqref{minT_total_original:subject_f_max} limit the maximum amount of computing resources available for the PS to $F^{\text{max}}$,
constraint \eqref{minT_total_original:sum_of_model_transmission_size} comes from eq. \eqref{model integrity}, which guarantees the model integrity after being split for any client $k$, and constraint \eqref{minT_total_original:sum_of_model_computation} comes from the following fact.

Given the AI-model to be trained, the amount of computing resources required to train the model with one data-sample, represented as $\Gamma$, and the amounts of computing resources required to perform the sample-wise BP and FP, respectively represented as $F^\text{tot}$ ($F^\text{tot} = F^\text{C}_k + F^\text{S}_k$) and $B^\text{tot}$ ($B^\text{tot}= B^\text{S}_k + B^\text{C}_k$), are constants. Therefore, we have the following equation, i.e., constraint \eqref{minT_total_original:sum_of_model_computation}. 
\begin{equation}
    \Gamma = F^\text{tot} +B^\text{tot}=F^\text{C}_k + F^\text{S}_k + B^\text{S}_k + B^\text{C}_k,\ \forall k \in \mathcal{K}. \label{min_T:sum_of_model_computation} \\
\end{equation} 

Since both continuous variable $\textbf{F}$ and discrete variable $\textbf{L}$ are involved, problem \eqref{minT_total_original:mainfun mathcal T by lf} is a mixed integer nonlinear programming (MINLP) problem, which cannot be solved directly by using conventional methods. In what follows, we develop effective methods to address this problem.

\section{Solving the Problem}
In problem \eqref{minT_total_original:mainfun mathcal T by lf}, the size of the \textit{client-side} model ($|\textbf{w}^\text{C}_k|$), the computing resources needed to train the \textit{client-side} model ($F_k^\text{tot}=F_k^\text{C}+B_k^\text{C}$), and the amount of generated \textit{intermediate results} ($\Lambda_k$) all depend on the selected cut-layer $l_k$ for client $k$.
However, as shown in the following Figs. \ref{fig:Model size fitting effect}, \ref{fig:Model computation fitting effect}, and \ref{fig:Smashed output size fitting effect}, there is no explicit relationship between the \textit{cut-layer} and other parameters of an AI-model. 
Since a DNN usually has hundreds of layers, this will lead to a unusually high time and space complexity to find the optimal \textit{cut-layers} for the clients. 
To the best of our knowledge, there is currently no better way to address this issue. In this paper, we propose a \textit{regression method} to quantify the relationship between the \textit{cut-layer} and other parameters of a given AI-model. Based on that, a fast and effective method is developed to find the high-quality solution of problem \eqref{minT_total_original:mainfun mathcal T by lf}.

It is worth noting that \textit{logistic regression} and \textit{curve fitting} methods have been widely used to address such relationship fitting problems. For example, in \cite{Boshkovska2015PracticalNE} and \cite{9354851}, the authors built practical non-linear energy harvesting models by curve fitting for measurement data. Based on the fitted model, effective resource allocation algorithms are developed for wireless information and power transfer (SWIPT) systems.

\subsection{Fit The Relationship Between The Parameters of An AI-model}
Different AI-models have different neural network structures.
To the best of our knowledge, there is currently no better way to obtain their relationship expression.
In this paper, we use the popular AI-model \textit{EfficientNetV2}\cite{tan2021efficientnetv2} as an example and propose the \textit{regression method} to fit the relationship between the \textit{cut-layer} and other parameters of an AI-model.

\subsubsection{Client-side model size against different cut-layers}
As shown in Fig. \ref{fig:Model size fitting effect}, with increasing $l_k$, the increase of $|\textbf{w}^\text{C}_k|$ is not obvious at the beginning, but the growth rate increases sharply when $l_k>36$. Therefore, we set the relationship between $l_k$ and $|\textbf{w}^\text{C}_k|$ as
\begin{align}
    |\textbf{w}^\text{C}_k| = & \alpha (l_k)^2,\ \forall k \in \mathcal{K} \label{model size fitting function}, 
\end{align}
where $\alpha \geq 0$ is the parameter to be fitted.
\begin{figure}[htbp]
    \centering
    \includegraphics[width=7cm,height=5.3cm]{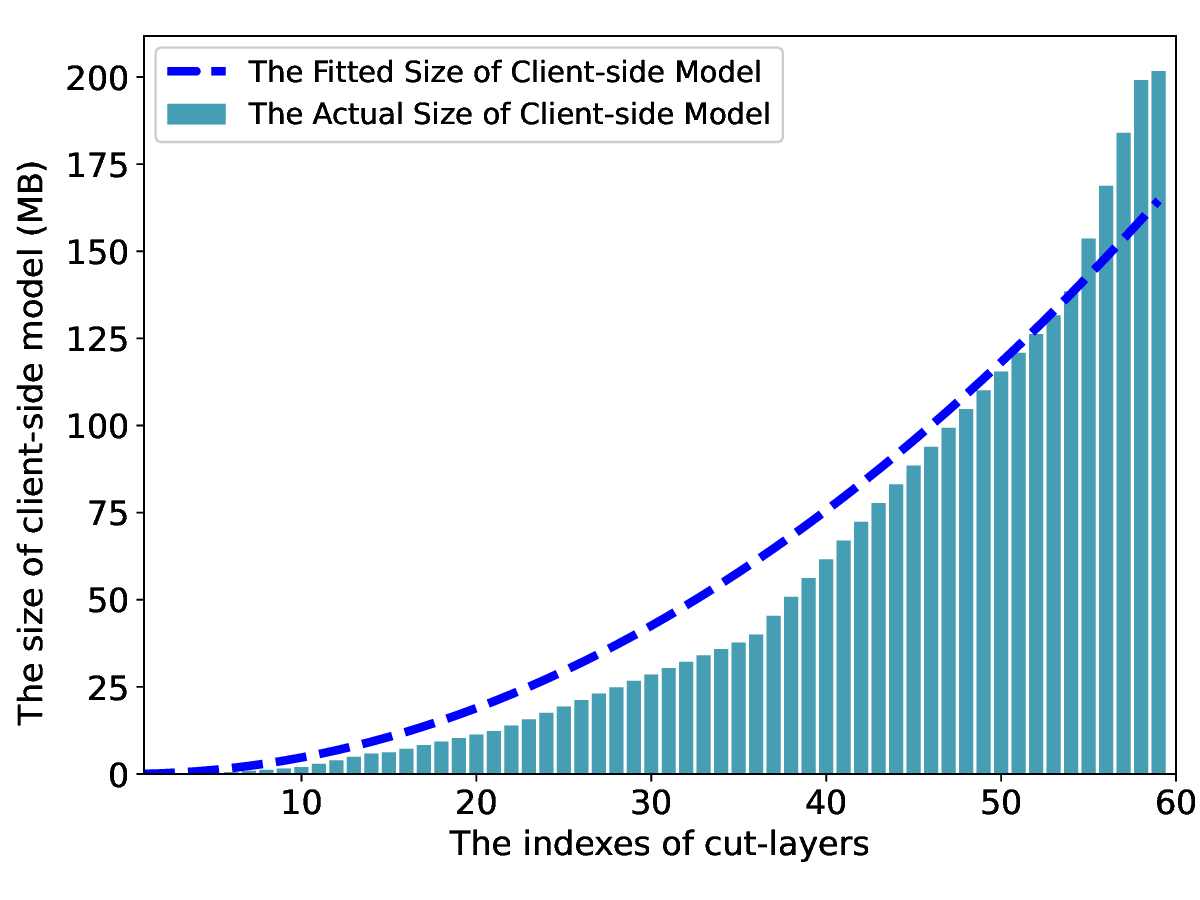}
    \caption{The quantitative-relationship between $l_k$ and $|\textbf{w}^\text{C}_k|$.}
    \label{fig:Model size fitting effect}
\end{figure}

\subsubsection{Training load of client against different cut-layers}
The training of the \textit{client-side} model includes the FP and BP.
The computational load of performing the FP can be obtained by using the python package \textit{torchinfo} \cite{9825633}, however, there is no direct way to know the computational load of performing the BP.
To address this issue, we trained \textit{EfficientNetV2} 1500 times using a dataset of size 32. The time consumed by FP and BP in each training session is shown in Fig. \ref{The time consumption for FP and BP.}.

\begin{figure}[htbp]
    \centering
    \includegraphics[width=7cm,height=4cm]{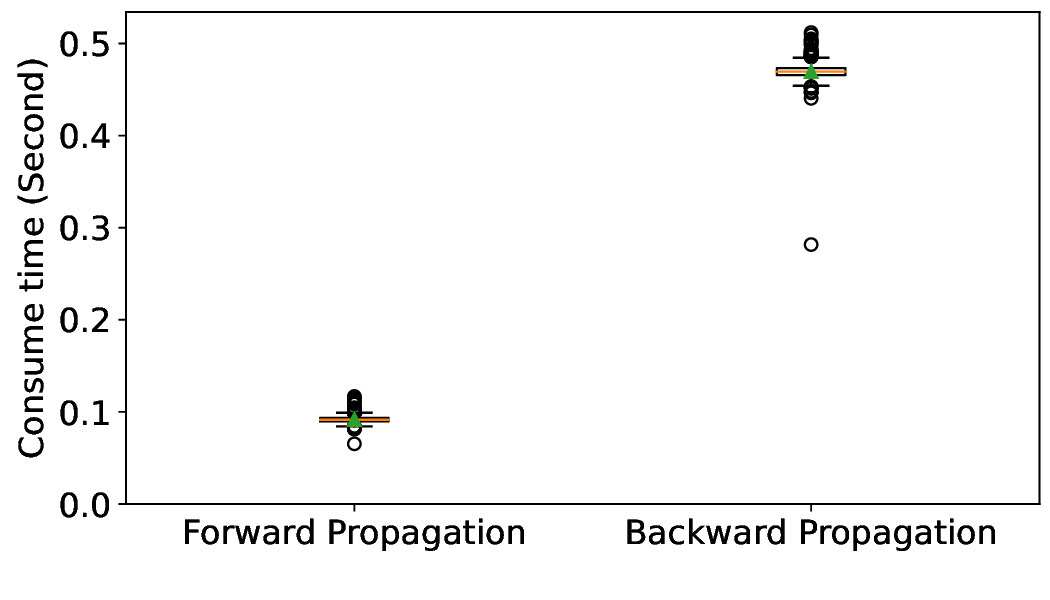}
    \caption{The time consumption of FP and BP}
    \label{The time consumption for FP and BP.}
\end{figure}

From Fig. \ref{The time consumption for FP and BP.}, we see that with the same computing power the time consumed by BP is roughly several times the time consumed by FP. So we have
\begin{equation}
    B_k^\text{tot} \approx \kappa F_k^\text{tot},\ \forall k \in \mathcal{K},
\end{equation}
where $\kappa \geqslant 1$. Fig. \ref{fig:Model computation fitting effect} shows the computational load of client $k$ for training the \textit{client-side} model (that is, $F_k^\text{tot}=F^\text{C}_k+B^\text{C}_k$) with different \textit{cut-layers}. There is an approximate linear relationship between them, which is given by
\begin{equation}
    F_k^\text{tot}=F^\text{C}_k+B^\text{C}_k = \beta l_k (1 + \kappa ),\ \forall k \in \mathcal{K},
    \label{kapa}
\end{equation}
where $\beta > 0$ is the parameter to be fitted.
\begin{figure}[htbp]
    \centering
    \includegraphics[width=7cm,height=5.3cm]{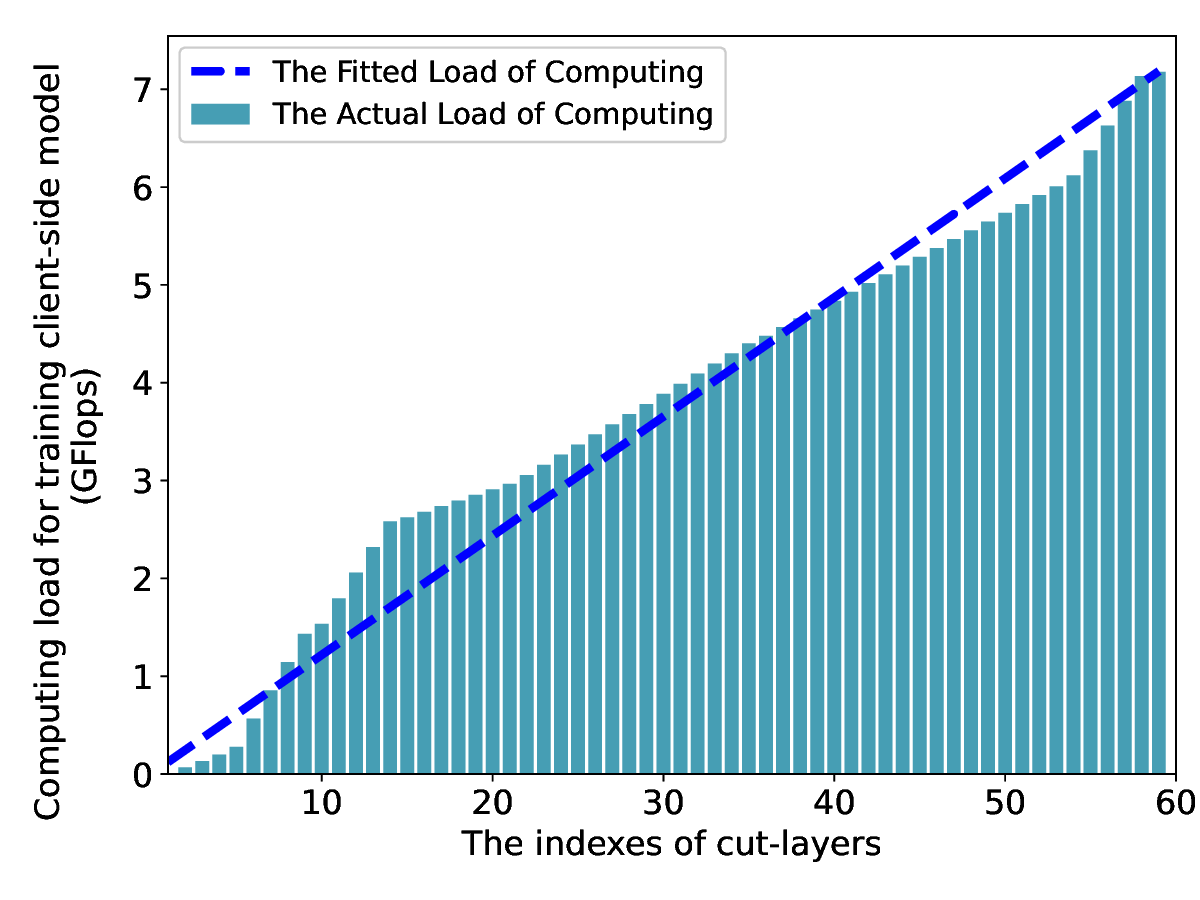}
    \caption{The quantitative-relationship between $l_k$ and $F_k^\text{tot}$.}
    \label{fig:Model computation fitting effect}
\end{figure}

\subsubsection{Size of intermediate results against different cut-layers} The head of a DNN is always a convolutional neural network (CNN) to extract features, thus reducing the computational burden of the subsequent feedforward neural network (FNN). Hence, the size of the \textit{intermediate results} decreases rapidly in the CNN layers but changes slowly in the FNN layers, as shown in Fig. \ref{fig:Smashed output size fitting effect}.
Accordingly, we set the relationship between $l_k$ and $\Lambda_k$ as
\begin{equation}
    \Lambda_k = \left|\mathcal{S}_{k,n}\right| =\left|\mathcal{G}_{k,n}\right| = \frac{\gamma_1}{l_k+\gamma_2},\ \forall k \in \mathcal{K},
    \label{smashed-data size fitting function}
\end{equation}
where $\gamma_1 > 0$ and $ \gamma_2 \geqslant 0$ are the parameters to be fitted.
\begin{figure}[htbp]
    \centering
    \includegraphics[width=7cm,height=5.3cm]{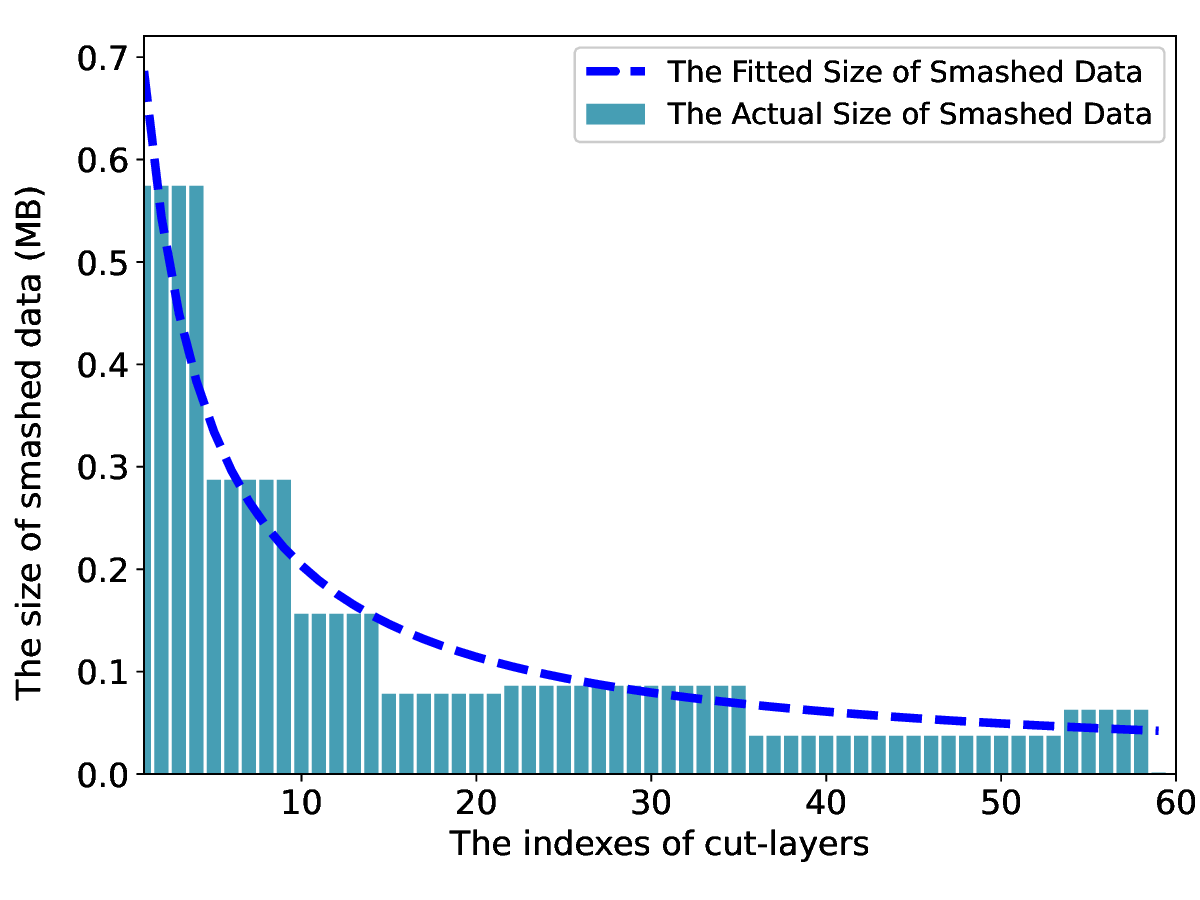}
    \caption{The quantitative-relationship between $l_k$ and $\Lambda_k$.}
    \label{fig:Smashed output size fitting effect}
\end{figure}

\subsection{Problem Reformulation}
By substituting eqs. \eqref{model size fitting function}, \eqref{kapa}, and \eqref{smashed-data size fitting function} into eq. \eqref{original expression for T_k}, the expression for $T_k$ can be written as
\begin{equation}
     T_k =  
    \begin{cases} 
        2\frac{\alpha l_k^2}{r_k} + I_k \left|\mathcal{B}_k\right| \left( \beta(1+\kappa)\left(\frac{ l_k }{F_k^\text{C}}+ \frac{  L - l_k  }{F_k^\text{S}} \right) + 2\frac{\frac{\gamma_1}{l_k+\gamma_2}}{r_k} \right), \\ 
        \quad \quad \quad \quad \quad \quad \quad \quad \quad \quad\quad \text{if}\ l_k^\text{min} \leqslant l_k < L,\ \forall k \in \mathcal{K},\   \\
        \displaystyle {\frac{2|\textbf{w}|}{r_k}+ I_k |\mathcal{B}_k| \frac{\Gamma}{f^\text{C}_k}},\ \text{if}\ l_k=L,\ \forall k \in \mathcal{K}.\\
    \end{cases}
    \label{fitted expression for T_k}
\end{equation}
Then, the MINLP problem \eqref{minT_total_original:mainfun mathcal T by lf} is transformed into the following continuous problem. 

\begin{align}
    & \min_{\textbf{L},\textbf{F}} \mathcal{T}, \; \;  \mathcal{T}=\max_{k\in\mathcal{K}}\ T_k,  \label{minT_total: continuoous by lfsfc} \\
    \mbox{s.t.}\;\;\;
    & l_k^\text{min} \leqslant l_k \leqslant L,\ \forall k \in \mathcal{K}, \tag{\ref{minT_total: continuoous by lfsfc}.1} \label{minT_total:subject l_m range} \\
    & \sum^{K}_{k=1} f^\text{S}_k \leqslant F^{\text{max}}.  \tag{\ref{minT_total: continuoous by lfsfc}.2} \label{minT_total:subject_f_max}
\end{align}

One can first find the solution of problem \eqref{minT_total: continuoous by lfsfc}, and then, rounds it to an integer.
We note that if the computing resource allocation $\textbf{F}$ for the $K$ clients were known, problem \eqref{minT_total: continuoous by lfsfc} can be decomposed into $K$ independent subproblems, and the goal of the $k^\text{th}$ subproblem is to minimize the latency for client $k$ to complete a local training session. These subproblems can be solved in parallel without loss of optimality.

Based on the above findings, we can solve the \textit{cut-layer selection} problem (to find the optimal \textbf{L}) and the \textit{computing resource allocation} problem (to find the optimal $\textbf{F}$) alternately and iteratively. Finally, the solution to problem \eqref{minT_total: continuoous by lfsfc} can be obtained.

\subsection{Solving The Cut-Layer Selection Problem}
For any given $\textbf{F}$, problem \eqref{minT_total: continuoous by lfsfc} can be decomped into $K$ independent subproblems as given below. 
\begin{alignat}{1}
    \min_{l_k} & \; T_k,\  \forall k \in \mathcal{K}, \label{minimize mathcal T using L} \\
    \mbox{s.t.}\;\;\; 
    & l_k^\text{min} \leqslant l_k \leqslant L,\ \forall k \in \mathcal{K}. \tag{\ref{minimize mathcal T using L}.1} 
\end{alignat}

By solving problem \eqref{minimize mathcal T using L}, the optimal \textit{cut-layer} of any client $k$ is obtained in closed-form, as shown in the following lemma.

\begin{lemma}
    With $\alpha > 0,\ \beta > 0,\ \kappa>0,\ \gamma_1 > 0,\ and\ \gamma_2 \geqslant 0$, the optimal cut-layer $l^*_k$ for any client $k$ is given by the following rules.
    \begin{align} 
        l^*_k = & 
        \begin{cases} 
            l_k^\text{min}, & \frac{\partial T_{k}}{\partial l_k} \big|_{l_k = l_k^\text{min}} > 0,  \\
            L, & \frac{\partial T_{k}}{\partial l_k} \big|_{l_k = L} < 0,\\ 
            \left \lfloor  \underset{l_k}{\operatorname{arg}}\, \left(\frac{\partial T_{k}}{\partial l_k}=0 \right) \right \rfloor , & \frac{\partial T_{k}}{\partial l_k} \big|_{l_k = l_k^\text{min}} \leqslant 0\ and\ \frac{\partial T_{k}}{\partial l_k} \big|_{l_k = L} \geqslant 0. 
        \end{cases} \; \label{The optimal splitting layer of k}
    \end{align}
    \label{lemma: optimal cut-layer}
\end{lemma}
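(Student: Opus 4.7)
The plan is to treat problem~(25) as minimizing the scalar smooth function $T_k(l_k)$ from the first branch of eq.~(24) over the compact interval $[l_k^{\min}, L]$, to establish strict convexity of $T_k$, and then to read off the three cases from the first-order conditions for a convex program on an interval.

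Differentiating the first branch of eq.~(24) twice in $l_k$ yields
\begin{equation*}
\frac{\partial^2 T_k}{\partial l_k^2} = \frac{4\alpha}{r_k} + \frac{4\, I_k|\mathcal{B}_k|\gamma_1}{r_k(l_k+\gamma_2)^3}.
\end{equation*}
Under the hypotheses $\alpha>0$, $\gamma_1>0$, $\gamma_2\geq 0$ together with the implicit positivity of the physical constants $r_k$, $I_k$, $|\mathcal{B}_k|$, this is strictly positive on $[l_k^{\min}, L]$. Hence $T_k$ is strictly convex in $l_k$, and $\partial T_k/\partial l_k$ is a strictly increasing function of $l_k$. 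Note that the linear-in-$l_k$ piece (arising from $\beta(1+\kappa)(l_k/f_k^{\text{C}} + (L-l_k)/f_k^{\text{S}})$) contributes a constant to $\partial T_k/\partial l_k$ and so plays no role in the convexity argument, but does influence the sign at the endpoints, which is exactly the quantity the lemma keys off.

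With strict convexity in hand, the minimizer of $T_k$ on the interval is pinned down by the sign pattern of the derivative at the endpoints. If $\partial T_k/\partial l_k\big|_{l_k^{\min}}>0$, then by monotonicity the derivative is positive throughout, so $T_k$ is increasing on $[l_k^{\min}, L]$ and the minimum is $l_k^{\min}$. Symmetrically, if $\partial T_k/\partial l_k\big|_{L}<0$, the derivative is negative throughout and the minimum is $L$. In the remaining case, the endpoint derivatives straddle zero, so by the intermediate value theorem together with strict monotonicity of $\partial T_k/\partial l_k$, there is a unique interior root, which by convexity of $T_k$ is the unconstrained minimizer. These three alternatives match eq.~(26) exactly; the $\lfloor\,\cdot\,\rfloor$ in the interior case enforces the integer constraint on the cut-layer, with strict convexity guaranteeing that the optimal integer is one of the two neighbors of the continuous root.

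The main obstacle lies in the interior case: clearing the $(l_k+\gamma_2)^2$ denominator in $\partial T_k/\partial l_k=0$ produces a cubic in $l_k$, so a literal closed form invokes Cardano's formula together with a sign argument that exactly one real root lies in $[l_k^{\min}, L]$. Strict convexity sidesteps this root selection: the intermediate-value argument above already isolates the unique sign change of $\partial T_k/\partial l_k$, so any root-finding routine is guaranteed to land on the minimizer. A minor bookkeeping item is that one should verify the $l_k=L$ branch of eq.~(24) is consistent with case~(ii) of the lemma, which follows from the fitted expressions $|\textbf{w}|=\alpha L^2$ and $\Gamma=\beta L(1+\kappa)$ together with the fact that no smashed data is transmitted when $\textbf{w}^{\text{S}}_k=\emptyset$.
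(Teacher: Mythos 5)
Your proposal is correct and follows essentially the same route as the paper's Appendix A: compute $\frac{\partial^2 T_k}{\partial l_k^2} = \frac{4\alpha}{r_k} + \frac{4\gamma_1 I_k |\mathcal{B}_k|}{r_k (l_k+\gamma_2)^3} > 0$, conclude that $\frac{\partial T_k}{\partial l_k}$ is strictly increasing, and then distinguish the three cases by the sign of the derivative at the endpoints, with the interior root (a cubic solvable by Cardano's formula) rounded to an integer. Your added remarks on the constant contribution of the linear term and on consistency of the $l_k = L$ branch are fine but not needed beyond what the paper already argues.
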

\begin{proof}
    Please refer to Appendix A.
\end{proof}

In the $3^\text{rd}$ case of eq. \eqref{The optimal splitting layer of k}, the function $\frac{\partial T_k}{\partial l_k}=0$ contains a cubic terms of $l_k$ and can be solved by using the \textit{Cardano’s formula} \cite{9408263}. It is worth noting that the complexity of finding the suboptimal \textit{cut-layers} for the $K$ clients is only $\mathcal{O}(K)$, thus avoiding an exhaustive search on the actual complex relationships between the parameters of an AI-model with extremely high time and space complexity.

\subsection{Computing Resource Allocation of the PS}
Substituting the obtained \textit{cut-layers} of the $K$ clients, namely, $\textbf{L}^*=(l_1^*, \cdots, l_K^*)$, into eq. \eqref{fitted expression for T_k}, one can get
\begin{equation}
     T_k =  
    \begin{cases} 
        2\frac{\alpha (l_k^*)^2}{r_k} + I_k \left|\mathcal{B}_k\right| \left( \beta(1+\kappa)\left(\frac{ l_k^* }{F_k^\text{C}}+ \frac{  L - l_k^*  }{F_k^\text{S}} \right) + 2\frac{\frac{\gamma_1}{l_k^*+\gamma_2}}{r_k} \right), \\ 
        \quad \quad \quad \quad \quad \quad \quad \quad \quad \quad\quad \text{if}\ l_k^\text{min} \leqslant l_k^* < L,\ \forall k \in \mathcal{K},\   \\
        \displaystyle {\frac{2|\textbf{w}|}{r_k}+ I_k |\mathcal{B}_k| \frac{\Gamma}{f^\text{C}_k}},\ \text{if}\ l_k^*=L,\ \forall k \in \mathcal{K}.\\
    \end{cases} 
\end{equation}
Problem \eqref{minT_total: continuoous by lfsfc} can be simplified to 
\begin{alignat}{1}
    \min_{\textbf{F}} \;\; & \mathcal{T}=\max_{k\in\mathcal{K}}\ T_k, \label{minimize mathcal T using F} \\
    \mbox{s.t.}\;\;\; 
    & \sum^{K}_{k=1} f^\text{S}_k \leqslant F^{\text{max}}. \tag{\ref{minimize mathcal T using F}.1} \label{minimize mathcal T using F:sum of PS frequency}    
\end{alignat}

The difficulty in solving problem \eqref{minimize mathcal T using F} is that the objective, $\mathcal{T}=\max_{k\in\mathcal{K}}T_k$, to be optimized is a nonlinear function w.r.t $T_k$, and $T_k$ is a piecewise function w.r.t $l_k^*$. To solve this difficulty, we define $\tilde{T}_k$ as the latency for client $k$ to complete a local training session using only the local computing power $f_k^\text{C}$.
We can sort the $K$ clients in ascending order according to $\{\tilde{T}_k|\forall k \in\mathcal{K}\}$ as shown in eq. \eqref{new index}, where the reordered index of client $k$ is represented as $\tilde{k}$, $\forall \tilde{k} \in \mathcal{K}$.
\begin{align}
    \overbrace{T_{\tilde{1}} \leqslant \cdots \leqslant T_{\tilde{k}-1=\theta-1}}^\texttt{FedAvg} \leqslant \overbrace{T_{\tilde{k}=\theta} \leqslant \cdots \leqslant T_{K}}^\texttt{SFL} \label{new index}
\end{align}
Next, we derive the following two lemmas.

\begin{lemma}
    The $K$ clients participating in an FL task can be divided into two groups: the clients in group $\mathcal{K}_\texttt{FedAvg} = \{\tilde{1}, \cdots, \theta-1\}$ adopt $\texttt{FedAvg}$ and train $\textbf{w}_{\tilde{k}}$ independently, while the clients in group $\mathcal{K}_\text{\texttt{SFL}} = \{\theta, \cdots, \tilde{K} \}$ adopt \texttt{SFL} and train $\textbf{w}_{\tilde{k}}$ in collaboration with the PS, where $\theta$ is the index of the first client in $\mathcal{K}_\text{\texttt{SFL}}$.  \label{lemma 1}
\end{lemma}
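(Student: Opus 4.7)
The plan is to establish the lemma by a threshold characterization combined with a simple exchange argument. Let $\mathcal{T}^*$ denote the optimal value of problem~\eqref{minT_total: continuoous by lfsfc}. My goal is to show that, after sorting clients by $\tilde{T}_k$ in ascending order as in eq.~\eqref{new index}, there exists an optimal $(\textbf{L}^*, \textbf{F}^*)$ in which every client $\tilde{k}$ with $\tilde{T}_{\tilde{k}} \leq \mathcal{T}^*$ is placed in $\mathcal{K}_{\texttt{FedAvg}}$ (i.e., $l_{\tilde{k}}^* = L$) and every client $\tilde{k}$ with $\tilde{T}_{\tilde{k}} > \mathcal{T}^*$ is placed in $\mathcal{K}_{\texttt{SFL}}$. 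The monotonicity of the sequence $\{\tilde{T}_{\tilde{k}}\}_{\tilde{k}=1}^{K}$ will then force the claimed prefix/suffix partition with threshold $\theta$.

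First I would record two immediate observations. (i) Any client in $\mathcal{K}_{\texttt{FedAvg}}$ satisfies $T_k = \tilde{T}_k$ by eq.~\eqref{Time consumption T_k at layer L} and consumes no PS resources, so optimality gives $\tilde{T}_k \leq \mathcal{T}^*$. (ii) Conversely, any client $k$ with $\tilde{T}_k > \mathcal{T}^*$ cannot meet the objective while in FedAvg mode, so it must belong to $\mathcal{K}_{\texttt{SFL}}$. These two facts already split the clients into two ``types'' but do not yet exclude the pathological possibility of a fast client (small $\tilde{T}_k$) residing in $\mathcal{K}_{\texttt{SFL}}$ while a slower client sits in $\mathcal{K}_{\texttt{FedAvg}}$, which is what prevents a direct reading of \eqref{new index}.

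Next I would rule out that pathology by a swap: suppose an optimal solution has a client $\tilde{i}$ with $\tilde{T}_{\tilde{i}} \leq \mathcal{T}^*$ assigned to $\mathcal{K}_{\texttt{SFL}}$ with $f_{\tilde{i}}^{\text{S}} > 0$. I would build a new feasible solution by setting $l_{\tilde{i}} = L$ (sending $\tilde{i}$ back to $\mathcal{K}_{\texttt{FedAvg}}$) and reallocating the freed capacity $f_{\tilde{i}}^{\text{S}}$ to any client still in $\mathcal{K}_{\texttt{SFL}}$. Using the observation that $T_k$ for $l_k < L$ is monotonically non-increasing in $f_k^{\text{S}}$ (read off directly from eq.~\eqref{fitted expression for T_k}), the reallocated client's latency cannot rise, while $T_{\tilde{i}}$ becomes $\tilde{T}_{\tilde{i}} \leq \mathcal{T}^*$ by assumption; constraint~\eqref{minT_total:subject_f_max} is preserved by construction. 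Hence the new solution is still optimal but has one fewer misplaced client. Iterating the swap exhausts all misplacements and produces an optimal allocation matching the characterization of the previous paragraph, from which the ordered block structure of \eqref{new index} follows.

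The hard part will be ensuring the exchange step is fully airtight, particularly when ties occur among the $\{\tilde{T}_{\tilde{k}}\}$ or when the ``bottleneck'' client happens to be $\tilde{i}$ itself. The monotonicity of $T_k$ in $f_k^{\text{S}}$ handles the former concern cleanly, and the universally available fallback $l_k = L$ guarantees that no client is ever harmed by receiving extra PS capacity, which together close the loop. Ties on the boundary $\tilde{T}_{\tilde{k}} = \mathcal{T}^*$ leave the index $\theta$ non-unique, but any choice consistent with the ordering yields a partition of the stated form, so this does not affect correctness of the lemma.
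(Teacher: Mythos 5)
Your argument is correct, and it reaches the lemma by a route that is close in spirit to, but organized differently from, the paper's Appendix B. The paper first asserts the water-pouring structure (the PS gives resources only to the clients with larger latency) and then argues $T_{\theta-1} \leqslant T_{\theta}$ by reallocating: if a \texttt{FedAvg} client were slower than the common \texttt{SFL} level, the PS could shift some of the \texttt{SFL} group's resources to it until the latencies equalize. You instead anchor the partition to the optimal value $\mathcal{T}^*$: every \texttt{FedAvg} client trivially has $\tilde{T}_k \leqslant \mathcal{T}^*$, every client with $\tilde{T}_k > \mathcal{T}^*$ must receive PS help, and the only remaining ambiguity (a client with $\tilde{T}_k \leqslant \mathcal{T}^*$ holding $f_k^{\text{S}}>0$) is removed by your swap, which is sound because $T_k$ is non-increasing in $f_k^{\text{S}}$ for $l_k<L$ and the reverted client's latency $\tilde{T}_k$ does not exceed $\mathcal{T}^*$, so the exchanged solution remains optimal and feasible; sorting by $\tilde{T}_k$ then yields the prefix/suffix structure with some threshold $\theta$. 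What your version buys is an explicit existence statement ("there is an optimal solution of this form"), clean treatment of ties at the boundary, and independence from the informal water-pouring claim; what the paper's version buys is that its exchange directly produces the ordering $T_{\theta-1}\leqslant T_\theta$ and the equal-latency property of the \texttt{SFL} group, which Lemma \ref{group division} and Lemma \ref{optimal solution} then use, whereas your argument defers those facts to the later lemmas. One presentational caveat: the paper invokes this lemma for problem \eqref{minimize mathcal T using F}, where the cut-layers are already fixed, while your swap resets $l_{\tilde{i}}=L$; this matches the paper's own implicit usage, but you should state explicitly that "\texttt{FedAvg} mode" means $l_k=L$ together with $f_k^{\text{S}}=0$, so that the exchange stays inside the feasible set of the joint problem \eqref{minT_total: continuoous by lfsfc}.
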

\begin{proof}
    Please refer to Appendix B.
\end{proof}

\begin{lemma}
    The optimum of the objective of problem \eqref{minimize mathcal T using F} is $\mathcal{T} = T_{\theta}$, $\theta \in \tilde{\mathcal{K}}$. \label{group division}   
\end{lemma}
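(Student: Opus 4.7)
The plan is to combine Lemma 2 with a standard water-filling/exchange argument on the server-side computing budget. Lemma 2 partitions the $K$ clients into $\mathcal{K}_{\texttt{FedAvg}} = \{\tilde{1}, \ldots, \theta-1\}$, whose latencies $T_{\tilde{k}}$ take the second branch of \eqref{fitted expression for T_k} and hence do not depend on $\textbf{F}$, and $\mathcal{K}_\text{\texttt{SFL}} = \{\theta, \ldots, K\}$, whose latencies take the first branch and do depend on $\textbf{F}$. The sorting in \eqref{new index} gives $\tilde{T}_{\tilde{k}} \leq \tilde{T}_{\theta - 1} \leq \tilde{T}_\theta$ for every $\tilde{k} < \theta$, and I will use this together with an exchange argument to show that the optimal value $\mathcal{T}$ is attained on the SFL group and equals $T_\theta$.

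First I would record the key monotonicity fact: for $\tilde{k} \in \mathcal{K}_\text{\texttt{SFL}}$ (so $l_{\tilde{k}}^* < L$), the only $\textbf{F}$-dependent term in the expression for $T_{\tilde{k}}$ in \eqref{fitted expression for T_k} is $I_{\tilde{k}}|\mathcal{B}_{\tilde{k}}|\,\beta(1+\kappa)(L - l_{\tilde{k}}^*)/f_{\tilde{k}}^\text{S}$, which is strictly positive and strictly decreasing in $f_{\tilde{k}}^\text{S}$. Hence $T_{\tilde{k}}$ is strictly decreasing and continuous in $f_{\tilde{k}}^\text{S}$ on $(0, F^{\max}]$, and the budget constraint \eqref{minimize mathcal T using F:sum of PS frequency} must be active at any optimum, else any slack could be reallocated to any SFL client to strictly decrease $\mathcal{T}$.

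Next I would run an exchange argument to equalize all SFL latencies at the optimum. Suppose for contradiction that at an optimizer $\textbf{F}^\star$ there exist $i, j \in \mathcal{K}_\text{\texttt{SFL}}$ with $T_i(\textbf{F}^\star) > T_j(\textbf{F}^\star)$. Transferring a sufficiently small $\epsilon > 0$ of computing power from $f_j^\text{S}$ to $f_i^\text{S}$ preserves the active budget and keeps $f_j^\text{S}$ positive, strictly decreases $T_i$ and continuously (hence only slightly) increases $T_j$. For $\epsilon$ small enough the new maximum over $\mathcal{K}_\text{\texttt{SFL}}$ is strictly below $T_i(\textbf{F}^\star)$, and the FedAvg latencies are unchanged, so $\mathcal{T}$ strictly decreases, contradicting optimality. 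Thus $T_\theta = T_{\theta+1} = \cdots = T_K =: T^*$ at the optimum.

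Finally I would close the argument by showing that $T^*$ is in fact the overall max. By the threshold construction of $\theta$ in Lemma 2, if $T^*$ were strictly below $\tilde{T}_{\theta-1}$ then client $\theta-1$ would strictly benefit from joining the SFL group (its fixed FedAvg latency would exceed the equalized SFL value), contradicting its membership in $\mathcal{K}_{\texttt{FedAvg}}$; hence $T^* \geq \tilde{T}_{\theta-1} \geq \tilde{T}_{\tilde{k}} = T_{\tilde{k}}$ for all $\tilde{k} < \theta$, so $\max_{k\in\mathcal{K}} T_k = T^* = T_\theta$. The main obstacle, as in any min-max resource allocation, is executing the $\epsilon$-exchange cleanly, i.e., verifying strict monotonicity from the explicit algebraic form in \eqref{fitted expression for T_k} and keeping the perturbation inside the feasible region; once Lemma 2 is granted the rest is bookkeeping.
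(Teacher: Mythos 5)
Your proposal is correct and follows essentially the same route as the paper: granting Lemma \ref{lemma 1}'s partition, you use the strict monotonicity of $T_{\tilde{k}}$ in $f^\text{S}_{\tilde{k}}$ and a resource-reallocation (water-filling) argument to equalize the SFL clients' latencies at $T_\theta$ while the \texttt{FedAvg} clients' fixed latencies stay below it, which is exactly the paper's (much terser) argument in Appendices B--C. Your $\epsilon$-exchange step merely makes explicit what the paper asserts, with the only minor bookkeeping caveat being the case of several SFL clients simultaneously attaining the maximum, which is handled by transferring resources to all maximizers at once.
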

\begin{proof}
    Please refer to Appendix C.    
\end{proof}   
 
By using \textbf{Lemmas \ref{lemma 1}} and \textbf{\ref{group division}}, problem \eqref{minimize mathcal T using F} can be converted into the following form. 
\begin{alignat}{1}
    \min_{\theta,\ \textbf{F}} \;\; & T_{\theta}, \label{subproblem 3} \\
    \mbox{s.t.}\;\;\; 
    & \mathcal{K}_\texttt{FedAvg} \cup \mathcal{K}_\text{\texttt{SFL}} = \mathcal{K}, \tag{\ref{subproblem 3}.1}\\
    & \mathcal{K}_\texttt{FedAvg} \cap \mathcal{K}_\text{\texttt{SFL}} = \emptyset, \tag{\ref{subproblem 3}.2}\\
    & T_{\tilde{k}} \leqslant T_{\theta},\ \forall \tilde{k} \in \mathcal{K}_{\texttt{FedAvg}}, \tag{\ref{subproblem 3}.3}\\
    & T_{\tilde{k}} =T_{\theta},\ \forall \tilde{k} \in \mathcal{K}_\text{\texttt{SFL}}. \tag{\ref{subproblem 3}.4}
\end{alignat}

By solving problem \eqref{subproblem 3}, the optimal amount of computing resources that the PS allocates to any client $k$ is obtained in closed-form, as shown in the following lemma.
\begin{lemma}
    The amount of computing resources allocated by the PS to each client $\tilde{k}\in\mathcal{K}$ is given by the following formula.
    \begin{align}
        f^\text{S}_{\tilde{k}} = 
        \begin{cases} 
            0, & \forall \tilde{k} \in \mathcal{K}_{\texttt{FedAvg}}, \\ 
            \frac{I_{\tilde{k}} \left|\mathcal{B}_{\tilde{k}}\right| (F^\text{S}_{\tilde{k}} + B^\text{S}_{\tilde{k}}) }{ T_\theta - \frac{ I_{\tilde{k}} \left|\mathcal{B}_{\tilde{k}}\right| (F^\text{C}_{\tilde{k}} + B^\text{C}_{\tilde{k}}) }{ f^\text{C}_{\tilde{k}} } - 2 \frac{I_{\tilde{k}} \left|\mathcal{B}_{\tilde{k}}\right| \Lambda_{\tilde{k}} + |\textbf{w}^\text{C}_{\tilde{k}}|}{r_{{\tilde{k}}}} } , & \forall \tilde{k} \in \mathcal{K}_{\texttt{SFL}}. \\ 
        \end{cases} \label{f_S_k representation using mathcal T}
    \end{align} 
    \label{optimal solution}
\end{lemma}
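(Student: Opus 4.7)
The plan is to combine Lemmas 1 and 2 with the equality constraints of problem \eqref{subproblem 3} and then perform a direct algebraic inversion of the latency formula. The two cases in eq. \eqref{f_S_k representation using mathcal T} correspond exactly to the two groups identified by Lemma \ref{lemma 1}, so the argument splits naturally into a trivial case and a one-variable solve.

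For $\tilde{k} \in \mathcal{K}_{\texttt{FedAvg}}$, Lemma \ref{lemma 1} tells us that client $\tilde{k}$ trains the whole model $\textbf{w}_{\tilde{k}}$ locally, which corresponds to the $l_{\tilde{k}}^* = L$ branch of eq. \eqref{fitted expression for T_k}. In this branch no server-side computation is performed for $\tilde{k}$, so the PS need not allocate any computing resources to it and $f^\text{S}_{\tilde{k}} = 0$; constraint (\ref{subproblem 3}.3) is automatically satisfied because the reordering in \eqref{new index} places these clients with $\tilde{T}_{\tilde{k}} \leqslant T_\theta$.

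For $\tilde{k} \in \mathcal{K}_{\texttt{SFL}}$, Lemma \ref{group division} gives the optimum as $\mathcal{T} = T_\theta$, and constraint (\ref{subproblem 3}.4) enforces $T_{\tilde{k}} = T_\theta$. Substituting the first branch of eq. \eqref{fitted expression for T_k} (or equivalently the unsimplified form \eqref{UE runing time in SFL fill with model}) into this equality yields
\begin{equation*}
T_\theta \;=\; \frac{2|\textbf{w}^\text{C}_{\tilde{k}}|}{r_{\tilde{k}}} \;+\; I_{\tilde{k}}|\mathcal{B}_{\tilde{k}}|\!\left(\frac{F^\text{C}_{\tilde{k}}+B^\text{C}_{\tilde{k}}}{f^\text{C}_{\tilde{k}}} + \frac{F^\text{S}_{\tilde{k}}+B^\text{S}_{\tilde{k}}}{f^\text{S}_{\tilde{k}}} + \frac{2\Lambda_{\tilde{k}}}{r_{\tilde{k}}}\right),
\end{equation*}
in which $f^\text{S}_{\tilde{k}}$ is the only unknown. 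Isolating the server-side-training term and inverting gives precisely the second branch of \eqref{f_S_k representation using mathcal T}; this is a routine rearrangement.

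The main obstacle is not the algebra but verifying the well-posedness of the second case: the denominator in \eqref{f_S_k representation using mathcal T} must be strictly positive, i.e., $T_\theta$ must exceed the sum of the purely-local training term, the uplink/downlink transmission of $\mathcal{S}_{\tilde{k},n}, \mathcal{G}_{\tilde{k},n}$, and the model-exchange cost. I would justify this by appealing to the ordering in \eqref{new index}: for $\tilde{k} \in \mathcal{K}_{\texttt{SFL}}$, client $\tilde{k}$'s purely-local training time $\tilde{T}_{\tilde{k}}$ strictly exceeds $T_\theta$ only if $\tilde{k} > \theta$, and is equal to $T_\theta$ only at $\tilde{k}=\theta$; offloading a nonzero portion of the workload to the PS is therefore both necessary and, by the construction of $\theta$ in the proof of Lemma \ref{group division}, feasible under the budget constraint \eqref{minimize mathcal T using F:sum of PS frequency}. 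Finally, I would remark that $\theta$ itself is fixed implicitly by $\sum_{\tilde{k}\in\mathcal{K}_{\texttt{SFL}}} f^\text{S}_{\tilde{k}} = F^{\max}$, so eq. \eqref{f_S_k representation using mathcal T} together with this budget equation determines both $\theta$ and $\textbf{F}$ uniquely.
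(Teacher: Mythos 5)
Your proposal is correct and follows essentially the same route as the paper: set $f^\text{S}_{\tilde{k}}=0$ for the \texttt{FedAvg} group, and for the \texttt{SFL} group impose $T_{\tilde{k}}=T_\theta$ from constraint (\ref{subproblem 3}.4) and solve the latency expression for the single unknown $f^\text{S}_{\tilde{k}}$, which yields eq. \eqref{f_S_k representation using mathcal T} directly. The only differences are cosmetic: the paper additionally records the sign of $\partial T_{\tilde{k}}/\partial f^\text{S}_{\tilde{k}}$ and its second derivative to argue uniqueness of the root, while you instead add a (somewhat informal, and strictly speaking not needed for the statement as given) remark on positivity of the denominator and on how $\theta$ is pinned down by the budget equality.
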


\begin{proof}
    Please refer to Appendix D.
\end{proof}

According to \textbf{Lemma \ref{optimal solution}}, once $\mathcal{T}=T_\theta$ is known, the optimal resource allocation of the PS, $\textbf{F}$, can be obtained. In order to solve $T_\theta$, we substitute formula \eqref{f_S_k representation using mathcal T} into constraint \eqref{minimize mathcal T using F:sum of PS frequency} and convert this inequality constraint into an equality constraint as  
\begin{align}
    & \sum_{\tilde{k} \in \mathcal{K}_{\texttt{SFL}} } \frac{I_{\tilde{k}} \left|\mathcal{B}_{\tilde{k}}\right| (F^\text{S}_{\tilde{k}} + B^\text{S}_{\tilde{k}}) }{ T_\theta - \frac{ I_{\tilde{k}} \left|\mathcal{B}_{\tilde{k}}\right| (F^\text{C}_{\tilde{k}} + B^\text{C}_{\tilde{k}}) }{ f^\text{C}_{\tilde{k}} } - 2 \frac{I_{\tilde{k}} \left|\mathcal{B}_{\tilde{k}}\right| \Lambda_{\tilde{k}} + |\textbf{w}^\text{C}_{\tilde{k}}|}{r_{\tilde{k}}} } = F^\text{max}
    \label{solve t_theta}
\end{align}

\begin{lemma}
    $T_\theta$ is strictly convex w.r.t $F^\text{max}$.
    \label{lemma 3}
\end{lemma}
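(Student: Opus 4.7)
The plan is to treat $T_\theta$ as an implicitly defined function of $F^{\max}$ through equation \eqref{solve t_theta} and apply implicit differentiation twice. To simplify notation, I would first introduce for each $\tilde k \in \mathcal{K}_{\texttt{SFL}}$ the positive constants
\begin{align}
a_{\tilde k} = I_{\tilde k}|\mathcal{B}_{\tilde k}|(F^\text{S}_{\tilde k}+B^\text{S}_{\tilde k}), \qquad b_{\tilde k} = \tfrac{I_{\tilde k}|\mathcal{B}_{\tilde k}|(F^\text{C}_{\tilde k}+B^\text{C}_{\tilde k})}{f^\text{C}_{\tilde k}} + 2\tfrac{I_{\tilde k}|\mathcal{B}_{\tilde k}|\Lambda_{\tilde k}+|\mathbf{w}^\text{C}_{\tilde k}|}{r_{\tilde k}},
\end{align}
so that eq. \eqref{solve t_theta} becomes $g(T_\theta) := \sum_{\tilde k\in\mathcal{K}_{\texttt{SFL}}} \frac{a_{\tilde k}}{T_\theta - b_{\tilde k}} = F^{\max}$. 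The feasibility of Lemma \ref{optimal solution} (i.e.\ $f^\text{S}_{\tilde k}>0$) forces $T_\theta > \max_{\tilde k} b_{\tilde k}$, and on this domain $g$ is smooth, strictly positive, strictly decreasing, with range $(0,\infty)$, so $T_\theta$ is a well-defined $C^2$ function of $F^{\max}$.

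Next I would differentiate the identity $g(T_\theta(F^{\max})) = F^{\max}$ once, obtaining
\begin{align}
\frac{dT_\theta}{dF^{\max}} = \frac{1}{g'(T_\theta)} = -\frac{1}{h(T_\theta)}, \qquad h(T) := \sum_{\tilde k} \frac{a_{\tilde k}}{(T-b_{\tilde k})^2}>0,
\end{align}
which is negative (more server compute lowers the bottleneck latency, as expected). Differentiating once more, the chain rule gives
\begin{align}
\frac{d^2 T_\theta}{d(F^{\max})^2} = \frac{h'(T_\theta)}{h(T_\theta)^2}\cdot\frac{dT_\theta}{dF^{\max}} = -\frac{h'(T_\theta)}{h(T_\theta)^3}.
\end{align}
Since $h'(T) = -2\sum_{\tilde k} a_{\tilde k}(T-b_{\tilde k})^{-3}<0$ on the feasible domain, the numerator $-h'(T_\theta)$ is strictly positive while $h(T_\theta)^3$ is strictly positive as well, so the second derivative is strictly positive. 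This is exactly the strict convexity claimed by the lemma.

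The only genuinely delicate point is justifying the smoothness and uniqueness of the implicit solution $T_\theta(F^{\max})$, which is where I expect a referee might push back; however, this follows immediately from the implicit function theorem applied to $g$, because $g'$ is nowhere zero on the relevant interval. Everything else reduces to sign-tracking of elementary sums of the form $\sum a_{\tilde k}(T-b_{\tilde k})^{-p}$, so no additional structural argument is needed beyond the two implicit differentiations above.
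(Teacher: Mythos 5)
Your proposal is correct and follows essentially the same route as the paper: both write the resource constraint as $F^{\max}=\sum_{\tilde k}a_{\tilde k}/(T_\theta-b_{\tilde k})$, establish that this map is strictly decreasing and strictly convex in $T_\theta$, and then differentiate the inverse (your implicit differentiation is exactly the paper's inverse-function derivative formulas, yielding $-H''/(H')^3>0$). Your added remarks on the domain $T_\theta>\max_{\tilde k}b_{\tilde k}$ and the implicit function theorem only make explicit what the paper leaves implicit.
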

\begin{proof}
    Please refer to appendix E.    
\end{proof}

\textbf{Lemma \ref{lemma 3}} ensures that for a given $F^\text{max}$ an unique solution $T_\theta$ exists for eq. \eqref{solve t_theta}, which can be solved by using the CVX tool \cite{grant2013cvx}.

\subsection{Algorithm Implementation and the Convergence and Complexity Analysis}
The overall algorithm to solve problem \eqref{minT_total_original:mainfun mathcal T by lf} is summarized in the following Algorithm \ref{Optimization of problem with one iteration}.

\begin{algorithm}
\caption{The overall algorithm to solve problem \eqref{minT_total_original:mainfun mathcal T by lf}. }
\label{Optimization of problem with one iteration}
\begin{small}
\begin{algorithmic}[1]
    
    \STATE Use eq. \eqref{Time consumption T_k at layer L} to obtain $\tilde{T}_k$, $\forall k \in \mathcal{K}$;
    \STATE Sort the $K$ clients in ascending order according to $\{\tilde{T}_k|k \in\mathcal{K}\}$, and update the index of client $k$ to its serial number $\tilde{k}$;
    \STATE Set the iteration number $i = 0$, $\textbf{F}^{i} = \{\frac{F^\text{max}}{K}, \cdots, \frac{F^\text{max}}{K}\}$, $\theta ^i= 1$, and $\mathcal{T}=T_{\theta^i}$;

    \REPEAT
        \STATE With the given $\textbf{F}^i$, select \textit{cut-layers} $\textbf{L}^i$ for the clients using eq. \eqref{The optimal splitting layer of k};
        
        \STATE Substitute $\textbf{L}^i$ and $\theta ^i$ into eq. \eqref{subproblem 3} and obtain $T_{\theta^{i+1}}$ using the CVX tool \cite{grant2013cvx}; 
        
        \STATE With the obtained $T_{\theta^{i+1}}$, update $\textbf{F}^{i+1}$ using eq. \eqref{f_S_k representation using mathcal T};
        
        \STATE $\theta^{i+1} = \Phi \left(f^\text{S}_{\tilde{1}},  \cdots, f^\text{S}_{\tilde{K}}\right)$;

        \STATE $i=i+1$;

    \UNTIL $\mathcal{T}=T_{\theta^i}$ converges, or the iteration number reaches the maximum $i=i^\text{max}$;

    \RETURN The computing allocation strategy $\textbf{F}$ of the PS and the cut-layers selected for the $K$ clients $\textbf{L}$.
\end{algorithmic}
\end{small}
\end{algorithm}

In \textit{Step 8}, function $\Phi(\cdot)$ counts the number of non-zero elements in a vector. Since the algorithm contains iterative execution, the convergence is proved below.

\begin{lemma}
    Algorithm \ref{Optimization of problem with one iteration} converges to a stable point.
\end{lemma}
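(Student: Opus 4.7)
The plan is to argue convergence by showing that Algorithm \ref{Optimization of problem with one iteration} is a block-coordinate-descent scheme on the objective $\mathcal{T}=\max_{k\in\mathcal{K}}T_k$, producing a monotone non-increasing sequence of objective values that is bounded below, hence convergent. Concretely, I let $\mathcal{T}^i$ denote the value $T_{\theta^i}$ at the start of the $i$-th iteration and show that $\mathcal{T}^{i+1}\leq \mathcal{T}^i$ for every $i$.

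First I would verify the ``cut-layer'' half-step (Step~5). For any frozen allocation $\textbf{F}^i$, the joint problem \eqref{minT_total: continuoous by lfsfc} decouples into the $K$ independent per-client problems \eqref{minimize mathcal T using L}, and Lemma~\ref{lemma: optimal cut-layer} returns the per-client minimizer $l_k^{i}$. Hence each $T_k$ can only decrease (or stay the same), and in particular
\begin{equation}
\max_{k\in\mathcal{K}} T_k(\textbf{L}^{i},\textbf{F}^i)\;\leq\;\max_{k\in\mathcal{K}} T_k(\textbf{L}^{i-1},\textbf{F}^i),\notag
\end{equation}
so the max-objective does not increase after Step~5.

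Next I would treat the ``resource allocation'' half-step (Steps~6--8). For frozen $\textbf{L}^i$, Lemmas~\ref{lemma 1}--\ref{optimal solution} produce the minimizer $\textbf{F}^{i+1}$ of \eqref{minimize mathcal T using F}, together with the implied partition index $\theta^{i+1}$; by optimality
\begin{equation}
\max_{k\in\mathcal{K}} T_k(\textbf{L}^{i},\textbf{F}^{i+1})\;\leq\;\max_{k\in\mathcal{K}} T_k(\textbf{L}^{i},\textbf{F}^{i}).\notag
\end{equation}
Chaining the two inequalities gives $\mathcal{T}^{i+1}\leq \mathcal{T}^i$. Since $\mathcal{T}^i\geq 0$ for all $i$, the monotone convergence theorem yields a limit $\mathcal{T}^\star\geq 0$, which is exactly the ``stable point'' referred to in the claim. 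Combined with the termination rule in Step~10 (either convergence of $\mathcal{T}^i$ or hitting the cap $i^{\max}$), this proves Algorithm~\ref{Optimization of problem with one iteration} halts with a well-defined value.

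The main obstacle I anticipate is handling the integer variable $\theta$ cleanly, because the feasible set implicitly changes with $\theta^i$: a client that was in $\mathcal{K}_{\texttt{FedAvg}}$ at iteration $i$ may move to $\mathcal{K}_{\texttt{SFL}}$ at iteration $i+1$, and one might worry about cycling. To close this gap I would observe that $\theta^{i+1}=\Phi(f^\text{S}_{\tilde 1},\dots,f^\text{S}_{\tilde K})$ is itself a byproduct of the minimizer of \eqref{subproblem 3}, so the inequality $\mathcal{T}^{i+1}\leq \mathcal{T}^i$ already accounts for the updated partition; the monotone bounded-below argument therefore remains valid irrespective of whether the discrete index $\theta^i$ stabilizes. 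If a stronger statement is needed, I would additionally note that $\theta^i$ lives in the finite set $\{1,\dots,K\}$, so together with the strict convexity in Lemma~\ref{lemma 3}, once $\mathcal{T}^i$ ceases to strictly decrease, the pair $(\theta^i,\textbf{F}^i)$ can take at most finitely many values, yielding a stationary point in finitely many iterations.
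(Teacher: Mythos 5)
Your proposal is correct and follows essentially the same route as the paper: both arguments show that each half-step (cut-layer selection with $\textbf{F}$ frozen, then resource allocation and the $\theta$ update with $\textbf{L}$ frozen) cannot increase the objective $\mathcal{T}$, so the sequence of objective values is non-increasing and, being bounded below by zero, converges. Your additional discussion of the $\theta$ update and possible cycling is a slightly more explicit treatment of what the paper folds into its third chained inequality (Step 7), but it does not change the substance of the argument.
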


\begin{proof}
    For a feasible solution $\left(\textbf{L}, \textbf{F}, \theta \right)$ of problem \eqref{minT_total_original:mainfun mathcal T by lf}, the resulting objective value can be obtained as $\mathcal{T} (\textbf{L},\textbf{F},\theta)$.
    
    In the $i^\text{th}$ iteration, the algorithm ensures that 
    \begin{align}
        \mathcal{T} \left(\textbf{L}^i,\textbf{F}^i,\theta^i\right) & \geqslant \mathcal{T} (\textbf{L}^{i+1},\textbf{F}^i,\theta^i) & Step\ 5 & \label{alternating minimization step 1 inequality}
        \\
        & \geqslant \mathcal{T} (\textbf{L}^{i+1},\textbf{F}^{i+1},\theta^{i}) & Step\ 6 & \label{alternating minimization step 2 inequality} \notag
        \\
        & \geqslant \mathcal{T} (\textbf{L}^{i+1},\textbf{F}^{i+1},\theta^{i+1}) & Step\ 7 & . \notag
    \end{align}
    
    Therefore, $\left\{\mathcal{T} (\textbf{L}^i,\textbf{F}^i,\theta^i)| i=1, \cdots, i^{\text{max}} \right\}$, the sequence of the objective values of problem \eqref{minT_total_original:mainfun mathcal T by lf}, is non-increasing.
    Furthermore, because $\mathcal{T}>0$, the alternate-optimization-based algorithm can converge to a lower bound as the number of iterations increases.
\end{proof}

Finally, the computational complexity of \textbf{Algorithm \ref{Optimization of problem with one iteration}} is analyzed as follows.
The complexity of ranking $K$ clients in \textit{Step 2} is $\mathcal{O}(K^2)$. The complexity of performing \textit{cut-layer} selection for the $K$ clients in \textit{Step 5} is $\mathcal{O}(K)$.
In \textit{Step 6}, the optimal $T_\theta$ could be obtained by using linear search which requires $\mathcal{O}(K^3)$ to converge \cite{Boyd2004ConvexO}. 
The resource allocation of the PS in \textit{Step 7} requires $\mathcal{O}(K)$.
As a result, the computational complexity of \textbf{Algorithm \ref{Optimization of problem with one iteration}} is $\mathcal{O}\left(K^2+ i^\text{max}(K^3 + 2K)\right)$.

Although the alternate-optimization-based \textbf{Algorithm \ref{Optimization of problem with one iteration}} is with polynomial time complexity, it generally converges at a sub-optimal solution to the original MINLP problem, namely, problem (\ref{minT_total_original:mainfun mathcal T by lf}), due to the \textit{regression method} employed. The high quality of the solution will be validated in the following experiments.

\section{Experiment Results}
To verify the performance of the proposed \texttt{SFL} method, we experiment on the \textit{EfficientNetV2} \cite{tan2021efficientnetv2}, a popular DNN-model with $L=59$ layers. The proposed \texttt{SFL} method is also applicable to other AI-models with different neural network structures.
The \textit{EfficientNetV2} is trained to perform image classification task on the dataset MNIST \cite{deng2012mnist}, which has 60,000 labeled samples (each sample consists of 28$\times$28 pixels) in 10 classes with 6,000 images per class.
The samples of each class is evenly distributed across the clients involved in an FL task.
By using the data generation method \cite{9261995}, the number of samples of each client ranges from 361 to 3,578, and the local dataset is split randomly with 75\% for training and 25\% for testing.
The SGD algorithm with the fixed batch size $|\mathcal{B}_k|=32$ is adopted to train the local model $\textbf{w}_k$ of any client $k$.

The available computing resources of the PS is set to $F^\text{max}=3,000$ G\textit{Flops}, and a total of 30 clients serve as the candidates for the PS-initiated FL tasks. The clients are heterogeneous and have different local computing power $f_k^\text{C}$ or transmission rates $r_k$.
The PS randomly selects a fraction of the candidate clients to participate in each round of global training.

\subsection{Effectiveness of The Regression Method in Finding The Optimal Cut-Layer}
\label{chapter 5.A}
The \textit{regression method}, proposed in Sec. V-A, is used to help a client quickly find the approximately optimal \textit{cut-layer}.
To verify its validity, we set the local computing power of client $k$ to $f_k^\text{C}=100$ G\textit{Flops}, the edge computing power of the PS allocated to client $k$ to $f_k^\text{S}=1,484$ G\textit{Flops}, the transmission rate to $r_k=4$ Mb/s, and the local dataset size of client $k$ to $|\mathcal{D}_k|=3,396$.
The latency for client $k$ to complete a local training session with different \textit{cut-layers} is shown in Fig. \ref{FL compare with SFL}. For each \textit{cut-layer}, the latency of training the \textit{client-side} model $\textbf{w}^{\text{C}}_k$ and the \textit{server-side} model $\textbf{w}^{\text{S}}_k$ and transmitting the \textit{intermediate results} ($\mathcal{S}_{k,n}$ and $\mathcal{G}_{k,n}$) are separately annotated in Fig. \ref{FL compare with SFL}. 

\begin{figure}[htbp]
    \centering
    \includegraphics[width=8.8cm,height=5.5cm]{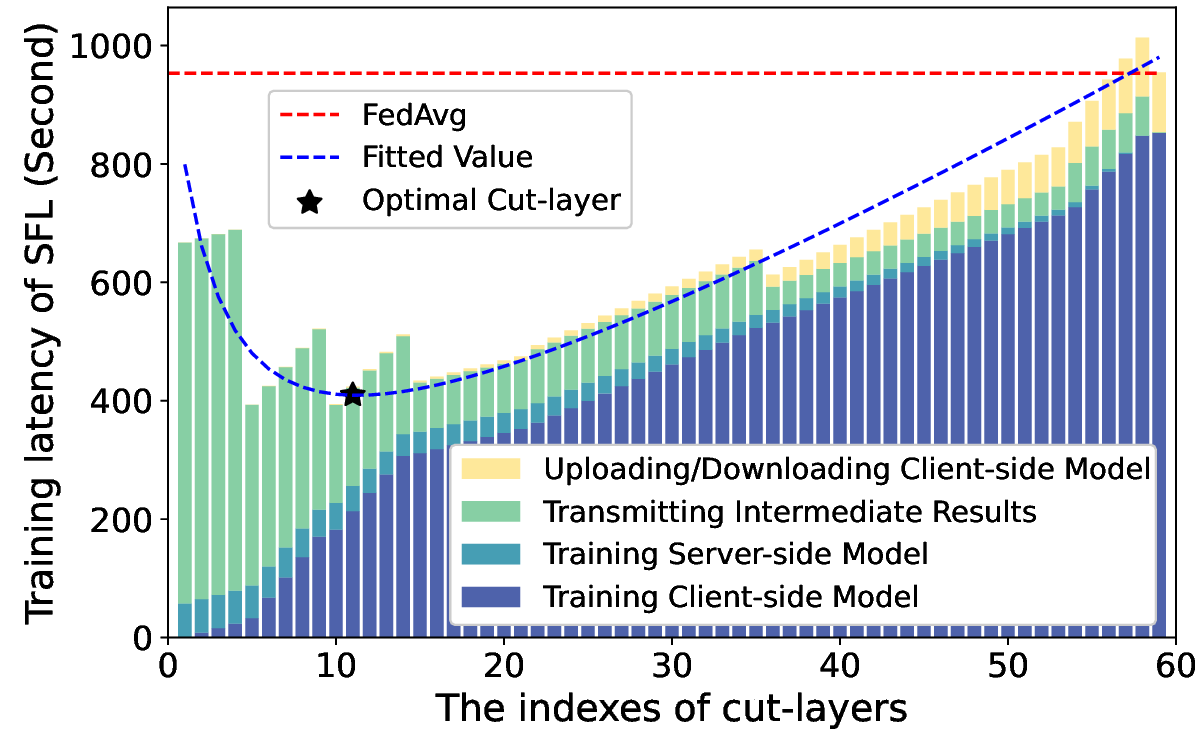}
    \caption{Training latency of \texttt{SFL} with different \textit{cut-layers}.}
    \label{FL compare with SFL}
\end{figure}

As shown in Fig. \ref{FL compare with SFL}, the \texttt{SFL} can considerably reduce the training latency of client $k$, compared to the \texttt{FedAvg}. Specially, when the optimal \textit{cut-layer} $l_k=11$ is selected, up to 300 seconds can be saved.
When $l_k < 11$, the training latency mainly results from transferring the \textit{intermediate results} between the client and PS. When $l_k > 11$, the training latency mainly comes from the training of the \textit{client-side} model $\textbf{w}^{\text{C}}_k$. In all cases, the latency for the PS to train the \textit{server-side} model $\textbf{w}^{\text{S}}_k$ and the latency for the client to download/upload the \textit{client-side} model $\textbf{w}^{\text{C}}_k$ are negligible for the overall training latency.

From Fig. \ref{FL compare with SFL}, we also note that the optimal \textit{cut-layer} for the client found by using the \textit{regression method} is consistent with that obtained by an exhaustive search on the space created by the actual quantitative relationship between the parameters of \textit{EfficientNetV2}. In this way, the high computational complexity caused by searching for the optimal \textit{cut-layer} of an AI-model is avoided.

To quantify the prediction accuracy of the proposed \textit{regression method}, we adopt the determination coefficient $\mathcal{R}$ \cite{9812907} as the evaluation metric. $\mathcal{R} \triangleq 1 - \frac{\sum^{L}_{l=1} (x - \hat{x}_{l})^2 }{ \sum^{L}_{l=1} (x_l - \tilde{x}_l)^2 }$, where $x_l$, $\hat{x}_l$, and $\tilde{x}_l$ represent the true value, the predicted value, and the historical mean value, respectively.
The closer the value of $\mathcal{R}$ is to 1, the better the regression performance is.
In the following Tab. \ref{tab:Performances of Curve Fitting}, we show the determination coefficient $\mathcal{R}$ of the fitted values of $|\textbf{w}^\text{C}_k|$ (see eq. \eqref{model size fitting function}), $F_k^\text{tot}$ (see eq. \eqref{kapa}) and $\Lambda_k$ (see eq. \eqref{smashed-data size fitting function}).

\begin{table}[!htbp]
\newcommand{\tabincell}[2]{\begin{tabular}{@{}#1@{}}#2\end{tabular}} 
\begin{center}
    \caption{The determination coefficient of the proposed regression method.}
    \label{tab:Performances of Curve Fitting}
    \begin{tabular}{R{2.5cm}C{0.6cm}L{1.5cm}}
        \toprule  
        Fitted Item && $\mathcal{R}$ \\
        \midrule  

        $|\textbf{w}^\text{C}_k|$ && 0.9482 \\
        $F_k^\text{tot}$ && 0.9659 \\
        $\Lambda_k $ && 0.9065 \\
        
        \bottomrule 
    \end{tabular}
\end{center}
\end{table}

As shown in Tab. \ref{tab:Performances of Curve Fitting}, $0.9<\mathcal{R}<1$ for all the predicted items. This indicates that the proposed \textit{regression method} can accurately predict the relationship between the selected \textit{cut-layer} and other parameters of an AI-model, which ensures that the proposed alternate-optimization-based \textbf{Algorithm \ref{Optimization of problem with one iteration}} can obtain high-quality solutions as analyzed in Sec. V-E.

\subsection{Results for joint cut-layer selection and computing resource allocation.}
To validate the performance of the proposed joint \textit{cut-layer} selection and computing resource allocation algorithm, namely \textbf{Algorithm \ref{Optimization of problem with one iteration}}, we select 10 clients from the 30 candidates, each with the same number of local training epochs $I_k=20$.
Fig. \ref{Frequency allocation effect} compares the training latency for each client using the proposed \texttt{SFL} to the vanilla \texttt{FedAvg}. For the \texttt{SFL}, the latency for each client to train the \textit{client-side} and \textit{server-side} models and communicate the \textit{intermediate results} are also separately annotated in Fig. \ref{Frequency allocation effect}.

\begin{figure}[htbp]
    \centering
    \includegraphics[width=8.8cm,height=6.5cm]{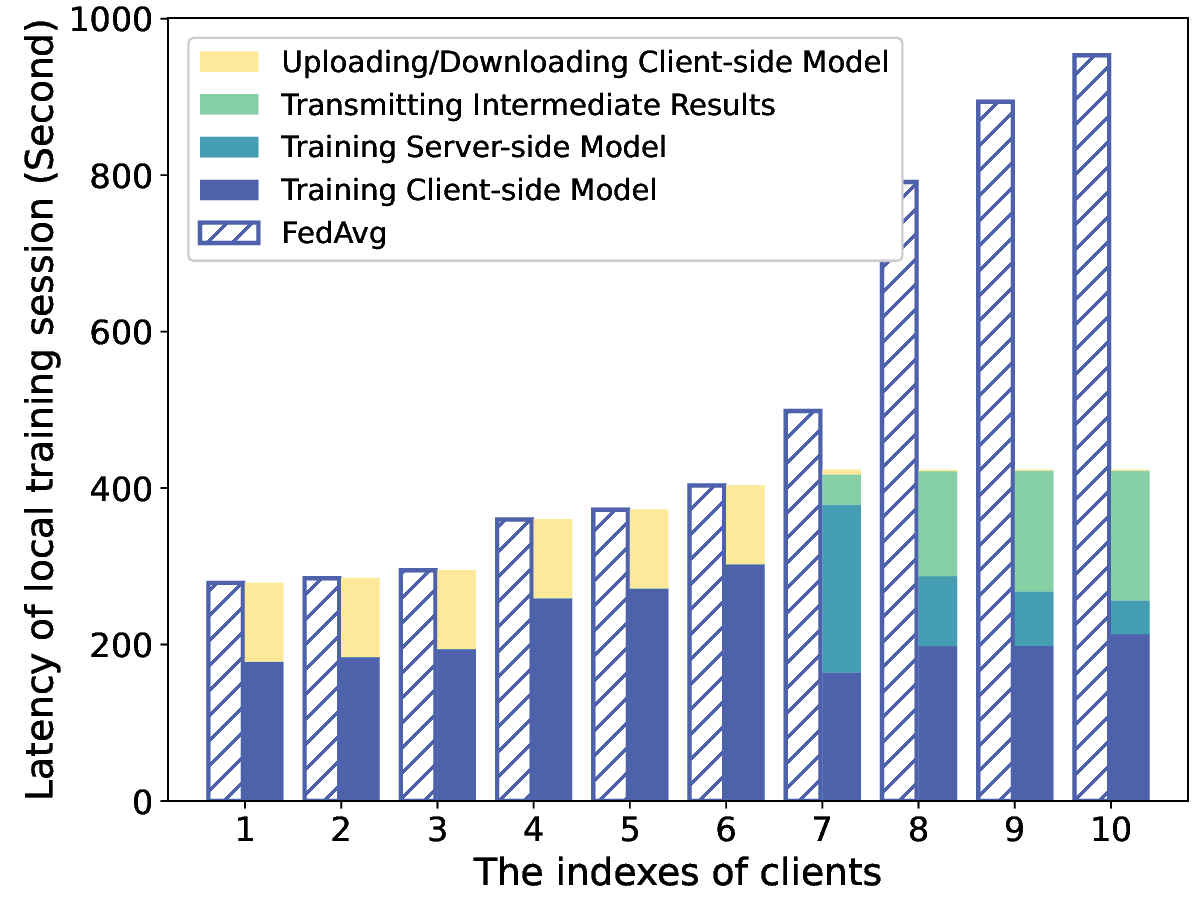}
    \caption{Training latency of different clients in the \texttt{SFL}.}
    \label{Frequency allocation effect}
\end{figure}

As can be seen from Fig. \ref{Frequency allocation effect}, the PS did not allocate any resources to clients 1 to 6 in the \texttt{SFL}, so these clients used the \texttt{FedAvg} to train the local model, while clients 7 to 10 shared the computing resources of the PS to reduce the training latency.
As a result, clients 7 to 10 have roughly the same training latency (410 s), but the training latency of clients 1 to 6 is different and each is lower than that of clients 7 to 10.
This verifies that the proposed \texttt{SFL} has the similar function as the \textit{water pouring algorithm} \cite{6223460}, which can allocate the computing resources of the PS to the resource-constraint clients adaptively, thus reducing the difference in training latency of the clients.
In contrast, when all the clients adopt \texttt{FedAvg}, they can only train the local model $\textbf{w}_k$ independently. As the overall training latency depends on the longest training time of the clients, the latency of the \texttt{FedAvg} (980 s) is much higher than that of the \texttt{SFL} (410 s). 

Next, we demonstrate the convergence of \textbf{Algorithm \ref{Optimization of problem with one iteration}}. As the number of iterations increases, the convergence of the \textit{cut-layer} selection and computing resource allocation are shown in Fig. \ref{Cut-layer Optimal Process} and Fig. \ref{Frequency Allocation Optimal Process}, respectively, and the resulting training latency is shown in Fig. \ref{Total Time Consumption Optimal Process}. It can be seen that \textbf{Algorithm \ref{Optimization of problem with one iteration}} can converge to a fixed point after about 5 iterations. This verifies the low computational complexity of the proposed algorithm.

\begin{figure}[htbp]%
    \centering
    \subfloat[Convergence of \textit{cut-layer} selection.]{
        \label{Cut-layer Optimal Process}
        \includegraphics[width=0.23\textwidth]{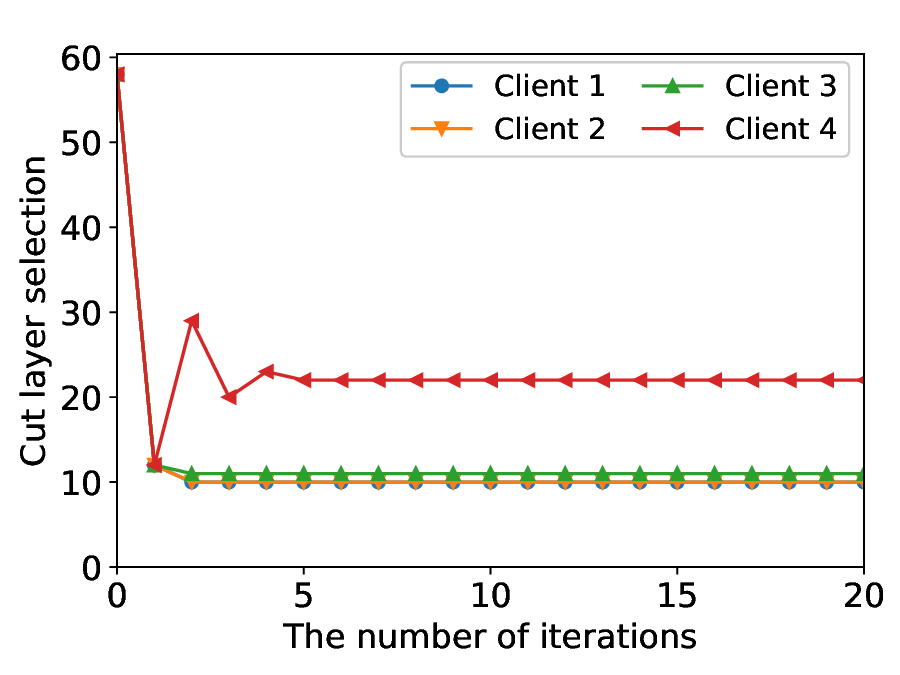}
    }\hfill
    \subfloat[Convergence of resource allocation.]{
        \label{Frequency Allocation Optimal Process}
        \includegraphics[width=0.23\textwidth]{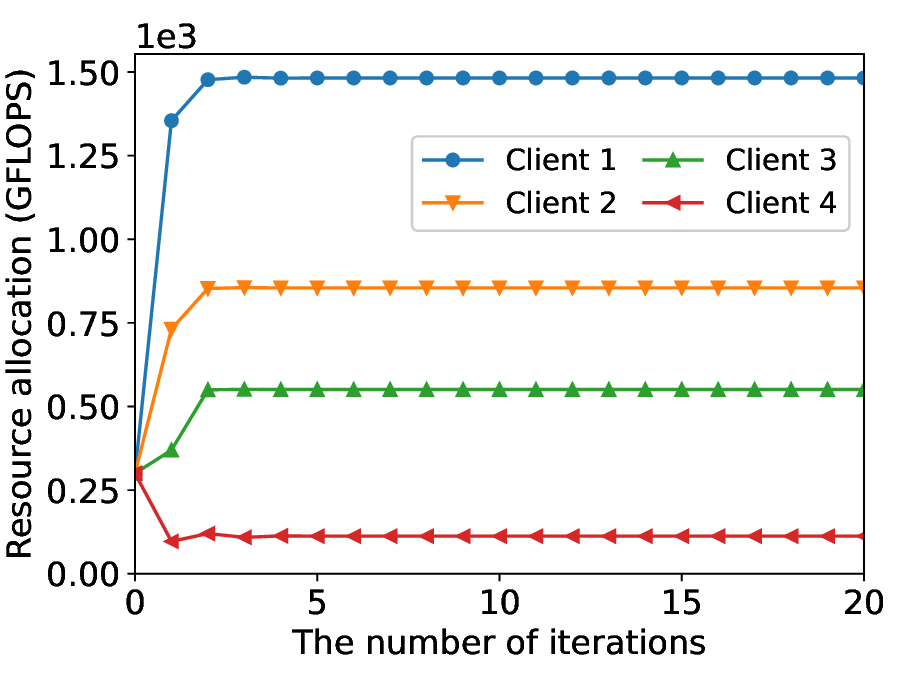}
    }
    \caption{Convergence of Algorithm \ref{Optimization of problem with one iteration} for the two subproblems.}
    \label{Cut Layer and Frequency Allocation Optimal Process}
\end{figure}

\begin{figure}[htbp]
    \centering
    \includegraphics[width=4cm,height=3cm]{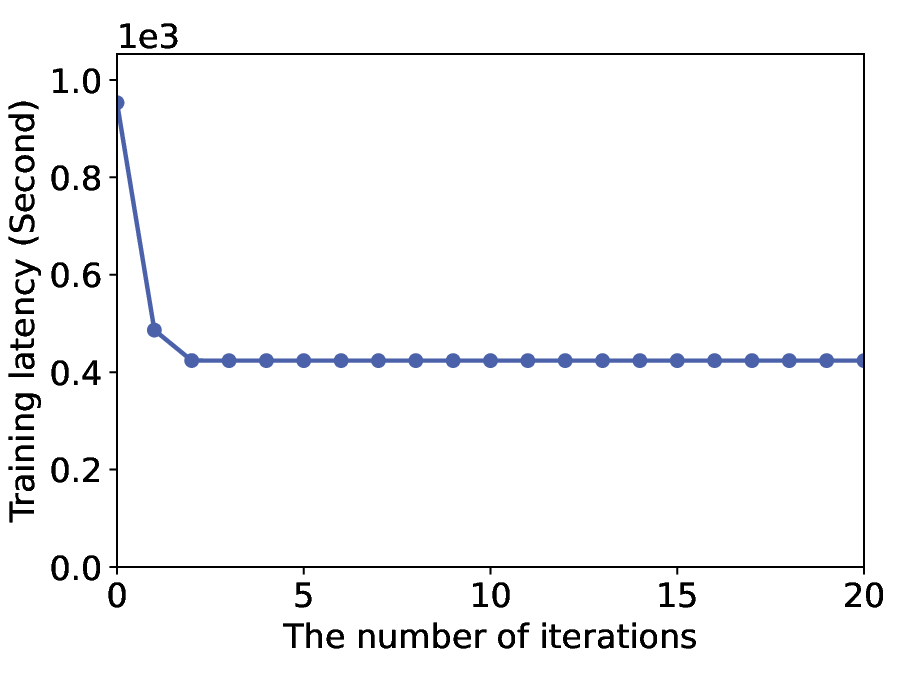}
    \caption{Convergence of the training latency of \texttt{SFL}.}
    \label{Total Time Consumption Optimal Process}
\end{figure}

Finally, we show the effect of the amount of computing resources available for the PS on the overall training latency of the \texttt{SFL}. For that purpose, we increase the computing resources of the PS from $F^\text{max}=100$ G\textit{Flops} to $F^\text{max}=9,000$ G\textit{Flops}. The variation of training latency of the \texttt{SFL} (for one local training session) is shown in Fig. \ref{Influence of total frequency}.

\begin{figure}[htbp]
    \centering
    \includegraphics[width=8.8cm,height=6cm]{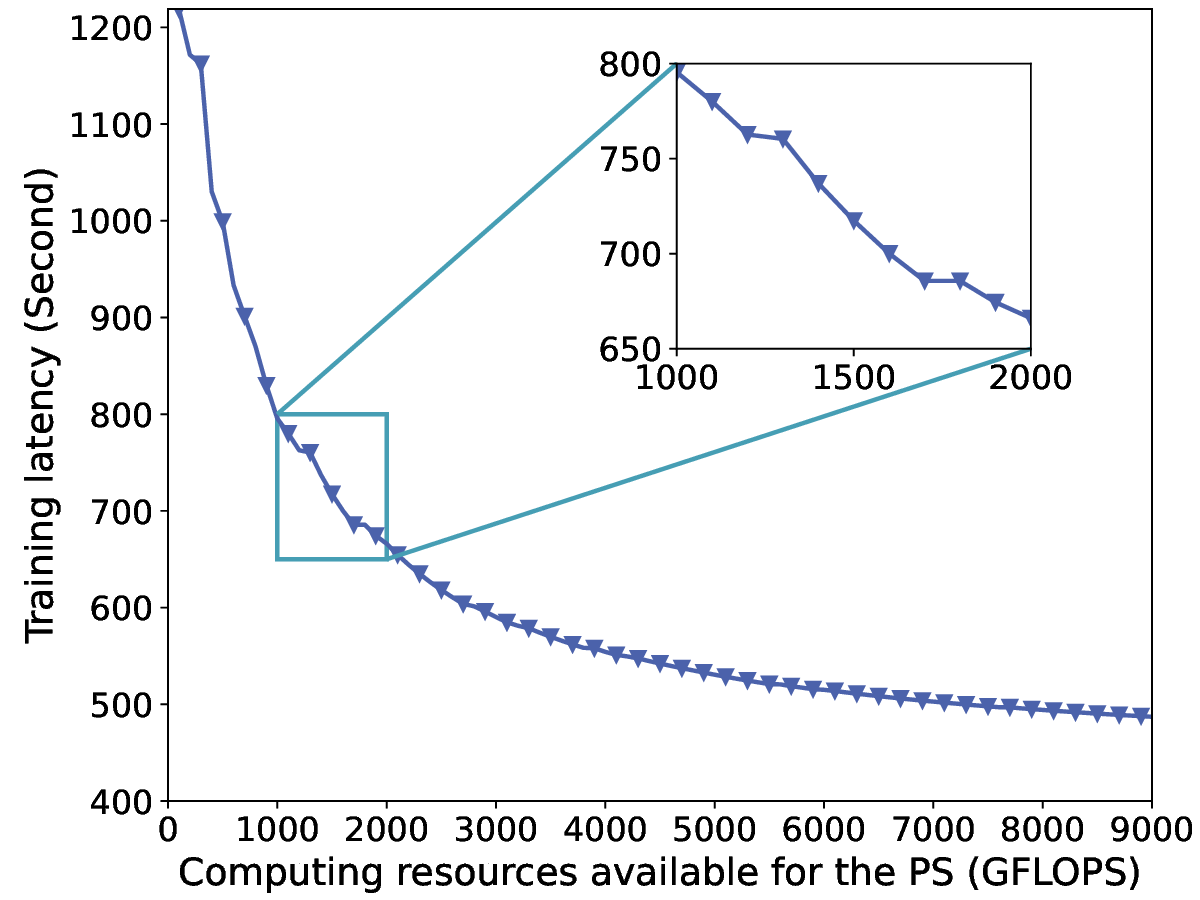}
    \caption{Training latency of \texttt{SFL} with different $F^\text{max}$.}
    \label{Influence of total frequency}
\end{figure}

As shown in Fig. \ref{Influence of total frequency}, with the increasing computing resources available for the PS, the training latency of the \texttt{SFL} gradually decreases. This is because more and more resource-constrained clients can share the computing resources of the PS, so that the system bottleneck (the longest training latency for the clients to complete a local training session) is continuously eliminated, thus reducing the overall training latency of the \texttt{SFL}.

It is also noted from Fig. \ref{Influence of total frequency} that the training latency of the \texttt{SFL} decreases rapidly when the server resources increase from $F^\text{max}=100$ G\textit{Flops} to $F^\text{max}=2,000$ G\textit{Flops}, while the latency decreases slowly when the server resources continue to increase. This indicates that the PS can optimize the amount of resource contributed to an FL task to improve the efficiency of resource utilization. Improving the marginal benefits of server resources is important when the PS simultaneously trains multiple AI-models or performs multi-task learning. This is beyond the scope of this paper and will be left for future studies.

\subsection{Test Accuracy of The Trained Model}
In this part, we show the test accuracy of the trained \textit{EfficientNetV2} after multiple rounds of global training.
In each round, 10 clients were randomly selected from the 30 candidates to participate, each with the same number of local training epochs $I_k=20$.
As the number of global training rounds increases, the time consumed by the \texttt{SFL} and the resulting test accuracy of the trained model are shown in Fig. \ref{SFL accuracy}. Note that each experiment is run at least 3 times and the average results are reported.

\begin{figure}[htbp]
    \centering
    \includegraphics[width=9cm,height=7cm]{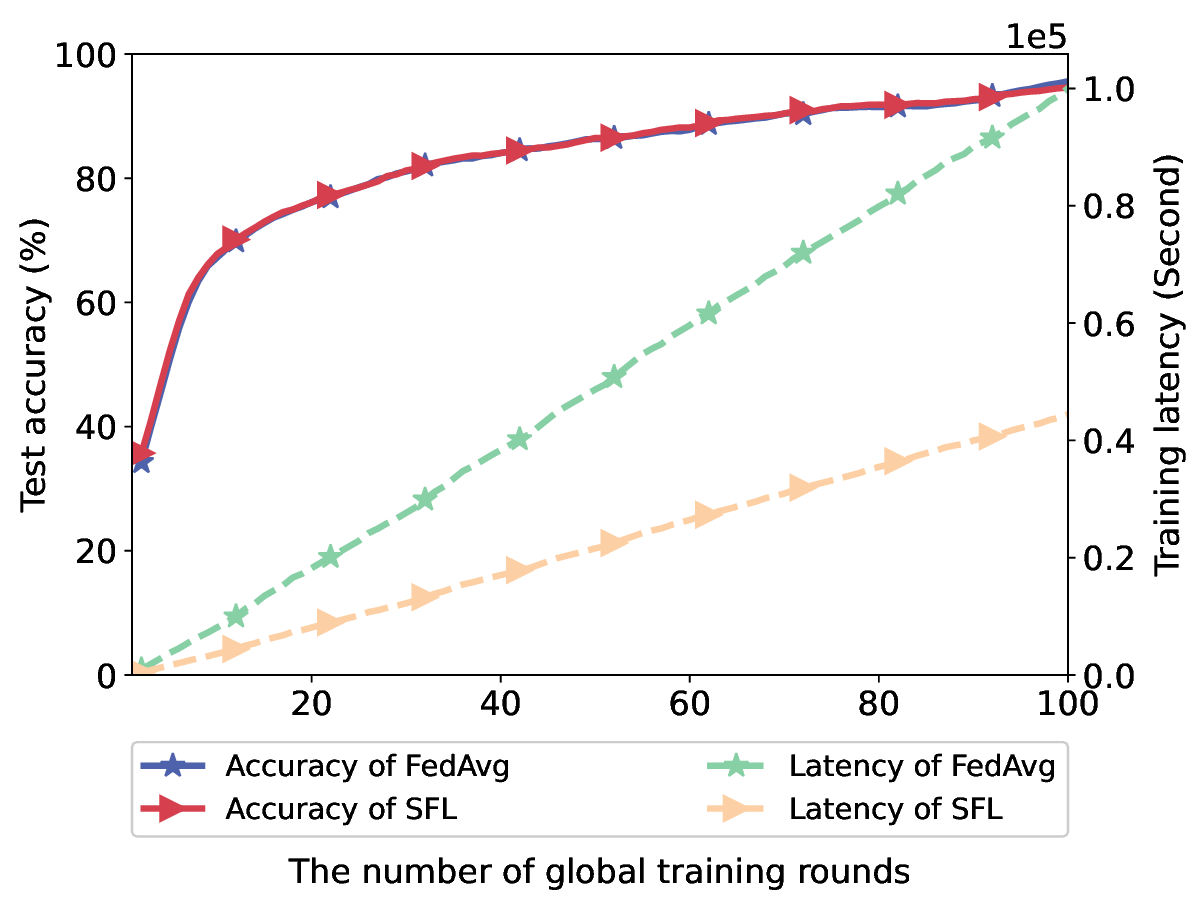}
    \caption{Training latency of the \texttt{SFL} and the resulting test accuracy of \textit{EfficientNetV2}.}
    \label{SFL accuracy}
\end{figure}

From Fig. \ref{SFL accuracy}, one can see that as the number of global training rounds increases, the test accuracies of the models trained with the \texttt{SFL} and \texttt{FedAvg} gradually increase and approach 0.90 after 60 rounds of global training.
Additionally, the accuracy convergence of the \texttt{SFL} is consistent with that of the \texttt{FedAvg}. This verifies the effectiveness of the proposed \texttt{SFL} framework.
But, in order to achieve the same test accuracy, the \texttt{SFL} consumes much less time than the \texttt{FedAvg}. 
This verifies that the proposed \texttt{SFL} can greatly improve the training efficiency of FL without compromising the test accuracy.

\section{Conclusions}
In this paper, we leverage the \textit{edge computing} and \textit{split learning} techniques to improve the training efficiency of \texttt{SFL}. Specially, we minimize the training latency of the SGMU-based \texttt{SFL} without loss of the test-accuracy of the trained model. However, the proposed problem is an MINLP problem which is challenging to solve. So we first propose a \textit{regression method} to transform it into a continuous problem, and then, develop an alternate-optimization-based algorithm with polynomial time complexity to solve it. The convergence of the algorithm is proved and the high quality of the solution is experimentally verified. The experiment results also show that the proposed \texttt{SFL} can train an AI-model to have the same test accuracy with much less training-time compared with the \texttt{FedAvg}. In addition, we also note that the PS can further improve the marginal benefits of its computing resources, which is the key to solving parallel-training of multiple AI-models as well as multi-task learning problems. These topics are interesting and deserve further study in the future.

\normalem
\bibliography{./SFLCiteBib}

\begin{appendices}
\section{}
    \label{appendix:proof of lemma: optimal cut-layer}
\textit{Proof of Lemma 1}: For any $l_k^\text{min} \leqslant l_k \leqslant L$, the first- and second-order derivatives of $T_k$ w.r.t $l_k$ are obtained as
\begin{equation}
    \frac{\partial T_k}{\partial l_k} = 4\frac{\alpha l_k}{r_k} + I_k \left|\mathcal{B}_k\right| \left( \beta(1+\kappa)\left(\frac{1}{F_k^\text{C}} - \frac{ 1 }{F_k^\text{S}}\right) - \frac{2\gamma_1}{r_k \left(l_k+\gamma_2\right)^2} \right),
\end{equation} 
and
\begin{equation}
   \frac{\partial^2 T_k}{\partial {l_k}^2} = 4\frac{\alpha}{r_k} +  \frac{4\gamma_1 I_k \left|\mathcal{B}_k\right|}{r_k \left(l_k+\gamma_2\right)^3},
\end{equation}
respectively.

Because $\alpha>0$, $\gamma_1 > 0$, and $\gamma_2 \geqslant 0$, $\frac{\partial^2 T_k}{\partial {l_k}^2}>0$. It means that $\frac{\partial T_k}{\partial l_k}$ increases monotonically w.r.t $l_k$.
The optimal value of $l_k$, denoted by $l^*_k$, can be given by considering the following three cases.

\underline{Case 1}: $\frac{\partial T_{k}}{\partial l_k} \big|_{l_k = l_k^\text{min}} > 0$. If $\frac{\partial T_{k}}{\partial l_k} \big|_{l_k = l_k^\text{min}} > 0$, one can get $\frac{\partial T_k}{\partial l_k}>0$ for any $l_k^\text{min} \leqslant l_k \leqslant L$, because $\frac{\partial^2 T_k}{\partial {l_k}^2}>0$. 
    Thus, $T_k$ increases monotonically w.r.t $l_k$ and reaches the minimum at the left boundary of its domain. Therefore, $l^*_k=l_k^\text{min}$.

\underline{Case 2}: $\frac{\partial T_{k}}{\partial l_k} \big|_{l_k = L} < 0$. If $\frac{\partial T_{k}}{\partial l_k} \big|_{l_k = L} < 0$, we can get $\frac{\partial T_k}{\partial l_k}<0$ for any $l_k^\text{min} \leqslant l_k \leqslant L$, as $\frac{\partial^2 T_k}{\partial {l_k}^2}>0$. 
Thus, $T_k$ decreases monotonically w.r.t $l_k$ and reaches the minimum at the right boundary of its domain $l^*_k=L$.
    
\underline{Case 3}: $\frac{\partial T_{k}}{\partial l_k} \big|_{l_k = l_k^\text{min}} \leqslant 0$ and $\frac{\partial T_{k}}{\partial l_k} \big|_{l_k = L} \geqslant 0$. Because $\frac{\partial^2 T_k}{\partial {l_k}^2}>0$, $\frac{\partial T_{k}}{\partial l_k} $ is monotonic and continuous w.r.t to $l_k$. When $\frac{\partial T_{k}}{\partial l_k} \big|_{l_k = l_k^\text{min}} \leqslant 0$ and $\frac{\partial T_{k}}{\partial l_k} \big|_{l_k = L} \geqslant 0$, there exists a unique solution $l_k$ for $\frac{\partial T_{k}}{\partial l_k} = 0$ within $l_k \in [l_k^\text{min}, L]$. By rounding the resulting $l_k$ to an integer, we can obtain $l^*_k={ \left \lfloor  \underset{l_k}{\operatorname{arg}}\, \left(\frac{\partial T_{k}}{\partial l_k}=0 \right) \right \rfloor  }.$ 

\section{}
\label{appendix:proof of lemma 1}
\textit{Proof of Lemma 2}: The heterogeneous clients have different local computing power $f_{\tilde{k}}^\text{C}$, and the available computing resources of the PS is limited to $F^\text{max}$. To solve the min-max problem (32), the PS must allocate its computing resources to the clients with larger training latency. Therefore, only part of the $K$ clients can share the computing resources of the PS and can train $\textbf{w}_k$ using the \texttt{SFL}. This part of clients form the group $\mathcal{K}_\texttt{SFL}$, and the left part of the $K$ clients, that can only train $\textbf{w}_k$ independently using the \texttt{FedAvg}, forms the group $\mathcal{K}_\texttt{FedAvg}$. The working mode is like the \textit{water pouring method} [37].

Next, we prove that the clients in group $\mathcal{K}_\texttt{FedAvg}$ consume shorter training time than those in group $\mathcal{K}_\text{\texttt{SFL}}$. Assuming that this assertion holds true, we can index the last client that adopts the \texttt{FedAvg} and the first client that adopts the \texttt{SFL} as $\theta-1$ and $\theta$, respectively. According to eq. (33), we have $T_{\theta-1} \leqslant T_{\theta}$. If not, the PS can take out a part of its computing resources assigned to group $\mathcal{K}_\text{\texttt{SFL}}$ to client $\theta-1$, until the training latency of all the clients in group $\mathcal{K}_\text{\texttt{SFL}} \cup \theta-1$ are the same. Thus, the assertion is proved.

\section{}
\label{appendix:proof of lemma 2}
\textit{Proof of Lemma 3}: From Lemma 2, we know that $T_{\theta-1} \leqslant T_{\theta}$. This means that the clients in group $\mathcal{K}_\texttt{FedAvg}$ consume shorter training times than those in group $\mathcal{K}_\text{\texttt{SFL}}$.

Moreover, when the optimal solution of problem (32) is achieved, the PS allocates its computing resources to the clients in group $\mathcal{K}_\text{\texttt{SFL}}$, so that all the clients in group $\mathcal{K}_\text{\texttt{SFL}}$ are with the same training time $T_{\theta}$. Thus, the lemma is proved.

\section{}
\label{appendix:proof of optimal solution}
\textit{Proof of Lemma 4}: Any client $\tilde{k}$, $\forall \tilde{k} \in \mathcal{K}_{\texttt{FedAvg}}$,
trains $\textbf{w}_{\tilde{k}}$ independently using the \texttt{FedAvg}.
Therefore, $f^\text{S}_{\tilde{k}}=0$, $\forall \tilde{k} \in \mathcal{K}_{\texttt{FedAvg}}$.

For the clients in group $\mathcal{K}_{\texttt{SFL}}$, we take the first and second-order derivatives of $T_{\tilde{k}}$ w.r.t $f^\text{S}_{\tilde{k}}$ using eq. (17) and have
\begin{equation}
    \frac{\partial T_{\tilde{k}}}{\partial f^\text{S}_{\tilde{k}}} = - \frac{ I_{\tilde{k}} \left|\mathcal{B}_{\tilde{k}}\right| (F^\text{S}_{\tilde{k}} + B^\text{S}_{\tilde{k}}) }{ (f^\text{S}_{\tilde{k}})^2 } < 0,\ \forall \tilde{k} \in \mathcal{K}_{\texttt{SFL}}, 
\end{equation}
and
\begin{equation}
    \frac{\partial^2 T_{\tilde{k}}}{\partial (f^\text{S}_{\tilde{k}})^2}  = 2 \frac{ I_{\tilde{k}} \left|\mathcal{B}_{\tilde{k}}\right| (F^\text{S}_{\tilde{k}} + B^\text{S}_{\tilde{k}}) }{ (f^\text{S}_{\tilde{k}})^3} > 0,\ \forall \tilde{k} \in \mathcal{K}_{\texttt{SFL}}.
\end{equation}
This indicates that $T_{\tilde{k}}$ is strictly convex w.r.t $f^\text{S}_{\tilde{k}}$, and there exists a unique root for eq. (17).

By solving eq. (18), we can get the solution of $f^\text{S}_{\tilde{k}}$ ($\forall \tilde{k} \in \mathcal{K}_{\texttt{SFL}}$) as shown in eq. (35)

\section{}
\label{appendix:proof of lemma 3}
\textit{Proof of Lemma 5}: Constraint (36) can be written as the following form.
    \begin{align}
        F^\text{max}= H(T_\theta) = \sum^{\theta}_{\tilde{k}=1} \frac{I_{\tilde{k}} \left|\mathcal{B}_{\tilde{k}}\right| (F^\text{S}_{\tilde{k}} + B^\text{S}_{\tilde{k}}) }{ T_\theta- \frac{ I_{\tilde{k}} \left|\mathcal{B}_{\tilde{k}}\right| (F^\text{C}_{\tilde{k}} + B^\text{C}_{\tilde{k}}) }{ f^\text{C}_{\tilde{k}} } - 2 \frac{I_{\tilde{k}} \left|\mathcal{B}_{\tilde{k}}\right| \Lambda_{\tilde{k}} + |\textbf{w}^\text{C}_{\tilde{k}}|}{r_{\tilde{k}}} }. \label{function H representation about mathcal T}
    \end{align}
The first and second-order derivatives of $H(T_\theta)$ w.r.t to $T_\theta$ are respectively given as
\begin{equation}
    \frac{\partial H }{\partial T_\theta} = \sum^{\theta}_{\tilde{k}=1} \frac{ - I_{\tilde{k}} \left|\mathcal{B}_{\tilde{k}}\right| (F^\text{S}_{\tilde{k}} + B^\text{S}_{\tilde{k}}) }{ \left( T_\theta - \frac{ I_{\tilde{k}} \left|\mathcal{B}_{\tilde{k}}\right| (F^\text{C}_{\tilde{k}} + B^\text{C}_{\tilde{k}}) }{ f^\text{C}_{\tilde{k}} } - 2 \frac{I_{\tilde{k}} \left|\mathcal{B}_{\tilde{k}}\right| \Lambda_{\tilde{k}} + |\textbf{w}^\text{C}_{\tilde{k}}|}{r_{{\tilde{k}}}} \right)^2 }<0,
    \end{equation}
and
\begin{equation}
    \frac{\partial^2 H }{ {\partial (T_\theta)}^2 } = \sum^{\theta}_{{\tilde{k}}=1} \frac{ 2 I_{\tilde{k}} \left|\mathcal{B}_{\tilde{k}}\right| (F^\text{S}_{\tilde{k}} + B^\text{S}_{\tilde{k}}) }{ \left( T_\theta - \frac{ I_{\tilde{k}} \left|\mathcal{B}_{\tilde{k}}\right| (F^\text{C}_{\tilde{k}} + B^\text{C}_{\tilde{k}}) }{ f^\text{C}_{\tilde{k}} } - 2 \frac{I_{\tilde{k}} \left|\mathcal{B}_{\tilde{k}}\right| \Lambda_{\tilde{k}} + |\textbf{w}^\text{C}_{\tilde{k}}|}{r_{{\tilde{k}}}} \right)^3 }>0.
\end{equation}
Therefore, $H(T_\theta)$ has the inverse which is given by $T_\theta = H^{-1}(F^\text{max})$.

The first- and second-order derivatives of $H^{-1}(F^\text{max})$ w.r.t $F^\text{max}$ are given as
\begin{equation}
    \frac{\partial H^{-1} }{\partial F^\text{max} } = \frac{1}{\frac{\partial H }{\partial T_\theta}} < 0,\ \text{and}\ \frac{\partial^2 H^{-1} }{ {\partial (F^\text{max})}^2 } = -\frac{\frac{\partial^2 H }{ {\partial (T_\theta)}^2 }}{{\left(\frac{\partial H }{\partial T_\theta}\right)}^3} > 0,
\end{equation}
respectively. This means that $T_\theta$ is convex w.r.t $F^\text{max}$.

\end{appendices}

\begin{IEEEbiography}[{\includegraphics[width=1in,height=1.25in,clip,keepaspectratio]{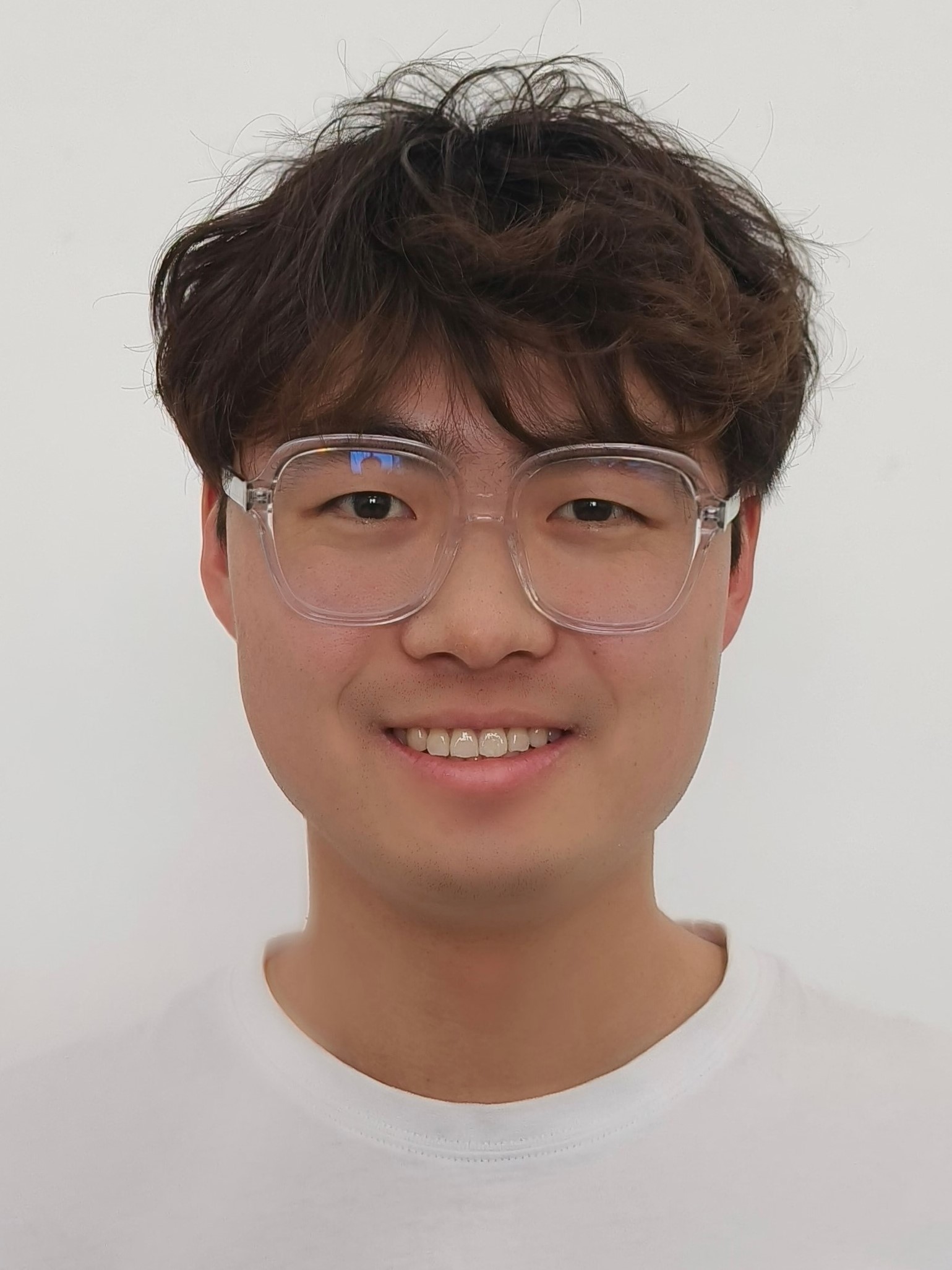}}]{Yao Wen}
    received the bachelor’s degree from the Colloge of Information Science and Technology, Nanjing Forestry University, Nanjing, China, in 2021. He is currently working toward the M.E. degree with the School of Computer Science and Technology, China University of Mining and Technology, Xuzhou, China. His research interests include federated learning, edge computing, deep neural network, split learning, and optimization theory.
\end{IEEEbiography}

\begin{IEEEbiography}[{\includegraphics[width=1in,height=1.25in,clip,keepaspectratio]{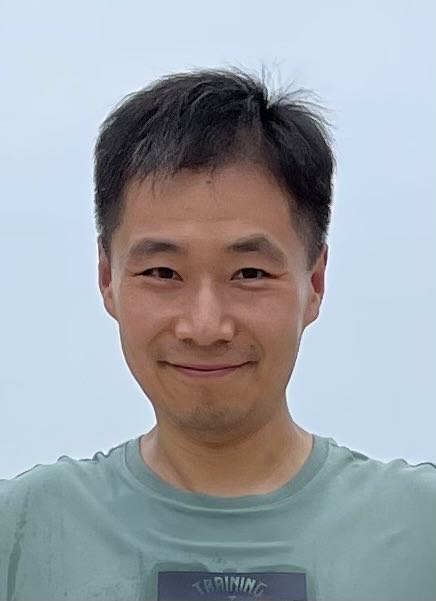}}]{Guopeng Zhang}
    received the bachelor’s degree from the School of Computer Science, Jiangsu Normal University, Xuzhou, China, in 2001, the master’s degree from the School of Computer Science, South China Normal University, Guangzhou, China, in 2005, and the Ph.D. degree from the School of CommunicationEngineering, Xidian University, Xi’an, China, in 2009. He was with ZTE Corporation Nanjing Branch for one year. In 2009, he joined the China University of Minin and Technology, Xuzhou, China, where he is currently a Professor with the School of Computer Science and Technology. He manages research projects funded byvarious sources, such as the National Natural Science Foundation of China. He has authored or coauthored more than 60 journal and conference papers. His main research interests include wireless sensor networks, wireless personalarea networks, and their applications in the Internet of Things.
\end{IEEEbiography}

\begin{IEEEbiography}[{\includegraphics[width=1in,height=1.25in,clip,keepaspectratio]{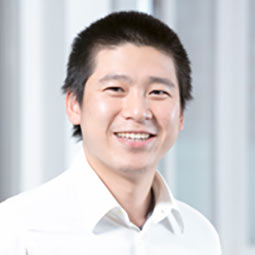}}]{Kezhi Wang}
    received the Ph.D. degree in engineering from the University of Warwick, U.K. He was with the University of Essex and Northumbria University, U.K. Currently, he is a Senior Lecturer with the Department of Computer Science, Brunel University London, U.K. His research interests include wireless communications, mobile edge computing, and machine learning.
\end{IEEEbiography}

\begin{IEEEbiography}[{\includegraphics[width=1in,height=1.25in,clip,keepaspectratio]{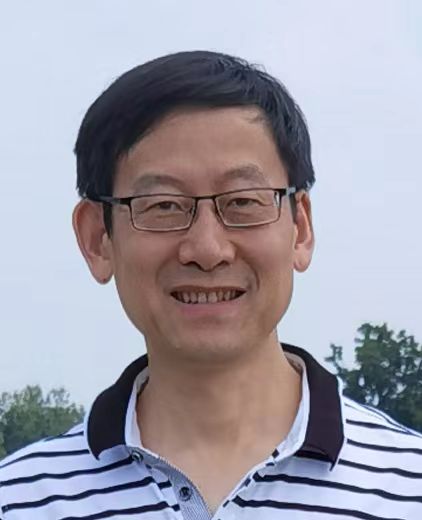}}]{Kun Yang}
    received his PhD from the Department of Electronic \& Electrical Engineering of University College London (UCL), UK. He is currently a Chair Professor in the School of Computer Science \& Electronic Engineering, University of Essex, leading the Network Convergence Laboratory (NCL), UK. He is also an affiliated professor at UESTC, China. Before joining in the University of Essex at 2003, he worked at UCL on several European Union (EU) research projects for several years. His main research interests include wireless networks and communications, IoT networking, data and energy integrated networks and mobile computing. He manages research projects funded by various sources such as UK EPSRC, EU FP7/H2020 and industries. He has published 400+ papers and filed 30 patents. He serves on the editorial boards of both IEEE (e.g., IEEE TNSE, IEEE ComMag, IEEE WCL) and non-IEEE journals (e.g., Deputy EiC of IET Smart Cities). He was an IEEE ComSoc Distinguished Lecturer (2020-2021). He is a Member of Academia Europaea (MAE), a Fellow of IEEE, a Fellow of IET and a Distinguished Member of ACM.
\end{IEEEbiography}

\end{document}